\documentclass[reqno,twoside]{amsart}

\usepackage[utf8]{inputenc}
\usepackage[T1]{fontenc}
\usepackage[english]{babel}
\usepackage{amsmath,amssymb,amsfonts,amsthm}
\usepackage{hyperref}
\usepackage{microtype}
\usepackage{fullpage}
\usepackage{floatrow}
\usepackage{lmodern}
\usepackage{textgreek}
\usepackage{mathtools}
\usepackage{mathrsfs}
\usepackage{graphicx}
\usepackage{xcolor}
\colorlet{BLUE}{black}

\graphicspath{{../}{../Figures/}{Figures/}}
\usepackage{microtype}
\usepackage{multirow}
\usepackage{booktabs}
\usepackage{sidecap}

\DisableLigatures{encoding = *, family = * }

\newtheorem{theorem}{Theorem}[section]
\newtheorem{definition}[theorem]{Definition}

\newtheorem{proposition}[theorem]{Proposition}
\newtheorem{remark}[theorem]{Remark}

\title[]{Bi-HYCO: Bi-Objective Cooperative Learning for PDE Parameter Identification under Fragmented Observations}

\author{Umberto Biccari\textsuperscript{\,$\ast$}}  
\address{\textsuperscript{$\ast$}\, Chair of Computational Mathematics, DeustoTech, University of Deusto, Avenida de las Universidades 24, 48007 Bilbao, Basque Country, Spain
	\newline \indent \textsuperscript{$\dagger$}\, School of Mathematics and Statistics, Beijing Institute of Technology, 100081 Beijing, China
	\newline \indent \textsuperscript{$\ddagger$}\, Chair for dynamics, control, machine learning, and numerics (Alexander Von Humboldt-Professorship), Department of Mathematics, Friedrich-Alexander-Universit\"at Erlangen-N\"urnberg, 91058, Erlangen, Germany.
	\newline \indent \textsuperscript{$\S$}\, Universidad Aut\'onoma de Madrid, Departamento de Matem\'aticas, Ciudad Universitaria de Cantoblanco, 28049 Madrid, Spain.	
} 
\email{umberto.biccari@deusto.es}
\thanks{This project has received funding from the European Research Council (ERC) under the European Union's Horizon Europe research and innovation programme (grant agreement No.~101096251-CoDeFeL). This material is based upon work supported by the Air Force Office of Scientific Research under award number FA8655-22-1-7012. EZ was partially supported by the Alexander von Humboldt Professorship program; the European Union's Horizon Europe MSCA project ModConFlex (HORIZON-MSCA-2021-DN-01, project 101073558); the Transregio 154 Project Mathematical Modeling, Simulation and Optimization Using the Example of Gas Networks of the DFG; and SURE-AI: The Norwegian Centre for Sustainable, Risk-Averse, and Ethical AI, grant 357482, Research Council of Norway, and the Madrid Government-UAM 		Agreement for the Excellence of the University Research Staff in the context of the V PRICIT. UB and EZ were partially supported by Grant PID2023-146872OB-I00-DyCMaMod of MICIU (Spain) and by COST Action CA24122-Multiscale Stochastics, Patterns, and Analysis of Combinatorial Environments. UB, RM, and EZ were partially supported by COST Action CA24136-Interactions between Control Theory and Machine Learning. JC was partially supported by the Program of China Scholarship Council (Grant No.~202506030033).}

\author{Jun Chen \textsuperscript{\,$\dagger$}}
\email{chenjun\_bcbm@163.com}

\author{Roberto Morales\textsuperscript{\,$\ast$}}
\email{roberto.morales@deusto.es}

\author{Enrique Zuazua\textsuperscript{\,$\ast$\,$\ddagger$\,$\S$}}  
\email{enrique.zuazua@fau.de, enrique.zuazua@deusto.es, enrique.zuazua@uam.es}

\keywords{Hybrid-cooperative learning, PDE-constrained inverse problems, fragmented observations, parameter identification, vector optimization, alternating methods}

\subjclass[2020]{65J22, 65K10, 68T07, 90C29}

\begin{document}

\begin{abstract}
Physical and synthetic models may describe complementary aspects of the same PDE-governed system while receiving different, possibly fragmented, observations. We propose Bi-Objective HYCO (Bi-HYCO), a cooperative framework that retains both representations and their local observational objectives while coupling their predicted states at unlabeled interaction points. These points contain no measurements and do not augment the data; they provide a communication mechanism in the common state space. The two criteria form a vector-valued objective, and weighted scalarizations provide computational realizations. For the deterministic shared-observation algorithm with fixed interaction points, we prove sufficient decrease and finite length of the whole alternating sequence, which converges to a mixed critical point under the stated Kurdyka-\L{}ojasiewicz-type assumptions. Elliptic transmission and two-dimensional Navier-Stokes experiments assess parameter and state reconstruction, noise and scalarization effects, and PINN/XPINN references. Ablations show that removing state interaction while retaining aggregation deteriorates parameter recovery in the tested configurations, particularly for Navier-Stokes.
\end{abstract}

\maketitle

\section{Introduction}

Scientific phenomena often admit several imperfect mathematical or computational representations. Physics-based models encode mechanistic structure, interpretable parameters, and equation-governed extrapolation, but may be incomplete, coarsely discretized, or misspecified. Synthetic models adapt flexibly to observations, but may offer less physical consistency, transparency, or reliability outside the observed regime \cite{montans2019data,quarteroni2025combining}. Neither representation is assumed to be the truth; each may contain complementary information about an underlying state \(u^\dagger\).

The information may also be incomplete and heterogeneous: the two representations may receive observations from different spatial regions, time intervals, sensors, modalities, or data-holding units \cite{mcmahan2017communication,yang2019federated}. Their datasets may coincide, overlap, or be disjoint. This leads to a fundamental question: \emph{how can complementary models cooperate when each has access to only a partial view of the same phenomenon?}

Scientific machine learning combines physical structure and observational information in several ways \cite{quarteroni2025combining,xu2022physics,yang2021bpinns}. Representative physics-informed approaches, including PINNs and domain-decomposed variants, place data and differential constraints within a neural representation and its training objective \cite{jagtap2020extended,moseley2023fbpinns,raissi2019physics}. Classical PDE-constrained parameter estimation instead retains a numerical forward model and optimizes its parameters using established reduced, all-at-once, adjoint, and regularization techniques \cite{giles2000introduction,gunzburger2003perspectives,hinze2009optimization,plessix2006review}. Hybrid-Cooperative Learning (HYCO) \cite{liverani2025hyco,liverani2026hyco} follows a different philosophy: it preserves distinct physical and synthetic representations and allows them to exchange information through their predicted states.

In HYCO, the two models interact by enforcing agreement between their predictions at selected points in their common state space \cite{liverani2025hyco,liverani2026hyco}. The points are unlabeled: they supply no measurements, but provide locations where information learned by one representation can influence the other. Original HYCO already introduced distinct physical and synthetic models, state-level interaction, alternating training, a game-theoretic interpretation and Nash-equilibrium results in simplified settings, and it conceptually permits heterogeneous observations. A complementary state-space analysis appears in \cite{zuazua2026coercivitygap}.

When the two representations are informed by different observations, their roles in the reconstruction are no longer naturally described by a single criterion. Each model is driven by its own data while cooperation requires consistency between their predicted states. This motivates treating the two model-specific objectives explicitly, while preserving the state-level interaction introduced by HYCO.

We therefore introduce \emph{Bi-Objective HYCO} (Bi-HYCO), in which the physical and synthetic components retain their respective observational objectives and are coupled through state agreement. The resulting vector formulation makes the trade-off between the two criteria explicit and provides a natural basis for weighted scalarization yielding practical algorithms \cite{Jahn2004,miettinen1999nonlinear}. 

The concrete realization studied in this paper is PDE-constrained parameter identification. Let \(\mathcal G\) be the admissible parameter set, \(\mathcal U\) the state space, and
\[
\mathcal S_{\rm phy}:\mathcal G\to\mathcal U,
\qquad
\gamma\mapsto u_{\rm phy}(\gamma),
\]
the parameter-to-state map defined by a numerical PDE solver. If \(\gamma^\dagger\) denotes the unknown physical parameter, observations are modeled abstractly as
\[
d^\delta=\mathcal O\mathcal S_{\rm phy}(\gamma^\dagger)+\varepsilon,
\]
where \(\mathcal O:\mathcal U\to\mathcal Y\) is an observation operator and \(\varepsilon\) is an observational perturbation. Such problems are natural testbeds for Bi-HYCO because observations may be incomplete, parameters are inferred indirectly, and the parameter-to-state and observation maps may lack injectivity or stability, motivating admissibility constraints and regularization \cite{alexanderian2026computational,banks1989estimation,chavent2010nonlinear,kirsch2021introduction,tarantola2005inverse}. 

We develop two implementation regimes. Under \emph{shared observations}, both components use the same dataset and a weighted scalarization is minimized through an alternating scheme. For this formulation, we establish convergence of the entire sequence to a mixed critical point under the assumptions of Section~\ref{section:convergence:result}, when the interaction points are fixed. Under \emph{fragmented observations}, instead, the components are informed by distinct datasets and operate through a decentralized scheme in which local parameter updates are combined by a coordinator while state-level interaction is maintained at unlabeled points. This formulation is inspired by Federated Averaging \cite{li2019convergence,mcmahan2017communication}; we provide an aggregate interpretation of its dynamics and assess its behavior numerically.

The numerical experiments examine two complementary parameter-identification problems. The first is a stationary linear elliptic transmission problem with a discontinuous coefficient, an irregular outer boundary, and spatially fragmented observations. The second is a nonlinear time-dependent Navier-Stokes problem on an annular domain, with viscosity and forcing amplitude as unknown parameters. Together they contrast stationary and evolutionary dynamics, linear and nonlinear models, and interface and annular geometries. The computations assess parameter recovery, state reconstruction, sensitivity to observational noise, scalarization weights, the effect of explicit state interaction, comparisons with PINN and XPINN reference formulations, and computational cost. 

The main contribution of this work is therefore a concrete cooperative methodology for PDE parameter identification under fragmented observations: distinct physical and synthetic models transfer information through agreement of their predicted states without centralizing their observation sets. Supporting contributions are:
\begin{itemize}
    \item an explicit bi-objective formulation and a precise Pareto/scalarization interpretation;
    \item whole-sequence convergence of the deterministic shared-observation alternating core; and
    \item a decentralized fragmented-observation realization, tested in two PDE problems through interaction ablations, scalarization analysis, and PINN/XPINN references.
\end{itemize}

Section~\ref{sec:methodology} presents the formulation and algorithms, Section~\ref{section:convergence:result} analyzes the deterministic core, Sections~\ref{sec:elliptic}-\ref{sec:annular-ns} report the PDE experiments, and Section~\ref{sec:conclusion} concludes. The Supplementary Materials collect ablations and implementation details.

\section{Bi-HYCO for PDE Parameter Identification with Fragmented Observations}\label{sec:methodology}

\subsection{Physical and synthetic representations and fragmented observations}

This section develops Bi-HYCO with a solver-based physical state and a parametric synthetic state in a common state space \(\mathcal U\). Their observation sets may be shared, partially overlapping, or disjoint; we use \emph{fragmented observations} for information held separately.

Let \(Q=\Omega\) for a stationary problem and
\(Q=\Omega\times(0,T)\) for a time-dependent one, where
\(\Omega\subset\mathbb{R}^d\) is open and \(T>0\). Let
\(\mathcal{U}\) be an appropriate state space over \(Q\).

\medskip 
\noindent {\bf Physical component.}
Let \(\gamma\in\mathcal{G}\subseteq\mathbb{R}^{d_{\mathcal G}}\) denote the
physical parameter to be identified. We assume that the governing PDE,
together with its boundary and, when needed, initial conditions, defines a
numerical parameter-to-state map
\[
    \mathcal{S}_{\rm phy}:\mathcal{G}\rightarrow\mathcal{U},
    \qquad
    \gamma\mapsto u_{\rm phy}(\cdot;\gamma).
\]
The map \(\mathcal{S}_{\rm phy}\) is understood as the realized PDE solver
used in the reconstruction.

\medskip
\noindent {\bf Synthetic component.}
The second state representation is parametrized by
\(\theta\in\Theta\subseteq\mathbb{R}^{d_\Theta}\) and is written
\[
    \mathcal{S}_{\rm syn}:\Theta\rightarrow\mathcal{U},
    \qquad
    \theta\mapsto u_{\rm syn}(\cdot;\theta).
\]

The analysis below does not require a particular realization of
\(\mathcal{S}_{\rm syn}\). In the numerical examples it is represented by a
finite-dimensional neural ansatz.

\medskip
\noindent {\bf Observation operators.}
To distinguish the information assigned to the two components, we introduce
observation operators
\[
    \mathcal{O}_{\rm phy}:\mathcal{U}\to\mathcal{Y}_{\rm phy},
    \qquad
    \mathcal{O}_{\rm syn}:\mathcal{U}\to\mathcal{Y}_{\rm syn}.
\]
If \(u^\dagger\) is the state generating the observations, the data
may be written abstractly as
\[
    d_{\rm phy}^{\delta}
    =
    \mathcal{O}_{\rm phy}u^\dagger+\varepsilon_{\rm phy},
    \qquad
    d_{\rm syn}^{\delta}
    =
    \mathcal{O}_{\rm syn}u^\dagger+\varepsilon_{\rm syn}.
\]

These operators may coincide, overlap only through part of their information,
or probe disjoint regions or time intervals.

For the pointwise observations used in Sections~\ref{sec:elliptic} and
\ref{sec:annular-ns}, we consider physical and synthetic datasets
\[
    \mathcal{D}_{\rm phy}
    :=
    \{(z_i^{\rm phy},d_i^{\rm phy})\}_{i=1}^{N_{\rm phy}},
    \qquad
    \mathcal{D}_{\rm syn}
    :=
    \{(z_j^{\rm syn},d_j^{\rm syn})\}_{j=1}^{N_{\rm syn}}.
\]
with corresponding local losses
\begin{align}
    \label{eq:L:phy}
    \mathcal{L}_{\rm phy}(\gamma)
    &:=
    \frac{1}{N_{\rm phy}}
    \sum_{i=1}^{N_{\rm phy}}
    \left|
    \mathcal{S}_{\rm phy}(\gamma)(z_i^{\rm phy})
    -
    d_i^{\rm phy}
    \right|^2
    +
    \mathcal{R}_{\rm phy}(\gamma),
\end{align}
and
\begin{align}
    \label{eq:L:syn}
    \mathcal{L}_{\rm syn}(\theta)
    &:=
    \frac{1}{N_{\rm syn}}
    \sum_{j=1}^{N_{\rm syn}}
    \left|
    \mathcal{S}_{\rm syn}(\theta)(z_j^{\rm syn})
    -
    d_j^{\rm syn}
    \right|^2
    +
    \mathcal{R}_{\rm syn}(\theta),
\end{align}
where \(\mathcal{R}_{\rm phy}\) and \(\mathcal{R}_{\rm syn}\) are optional
regularization terms.

\subsection{Unlabeled state interaction}

The two components exchange information through their predicted states. We organize this interaction into \emph{communication rounds}, indexed by \(m\), each corresponding to one stage of the cooperative procedure in which the current physical and synthetic states are compared at a set of common locations. 

At communication round \(m\), let \(\mathcal{Z}^m\coloneqq\{z_h^m\}_{h=1}^{H}\subset Q\) be a collection of unlabeled interaction points. The state discrepancy is
measured by
\begin{align}
    \label{eq:L:int}
    \mathcal{L}_{\rm int}^m(\gamma,\theta)
    &:=
    \frac{1}{H}
    \sum_{h=1}^{H}
    \left|
    \mathcal{S}_{\rm phy}(\gamma)(z_h^m)
    -
    \mathcal{S}_{\rm syn}(\theta)(z_h^m)
    \right|^2.
\end{align}

When \(\mathcal{Z}^m\) is fixed, the superscript \(m\) is omitted. The
interaction points carry no additional measured values; they only compare the
two reconstructed states. Consequently, increasing \(H\) does not amount to
adding physical observations.

\subsection{Bi-objective formulation, scalarization, and Pareto interpretation}

Let
\(\mathcal{X}:=\mathcal{G}\times\Theta\).
For a fixed interaction weight \(\lambda_{\rm int}\geq0\), define
\[
    F_{\rm phy}(\gamma,\theta)
    :=
    \alpha_{\rm phy}\mathcal{L}_{\rm phy}(\gamma)
    +
    \lambda_{\rm int}\mathcal{L}_{\rm int}(\gamma,\theta),
\]
and
\[
    F_{\rm syn}(\gamma,\theta)
    :=
    \alpha_{\rm syn}\mathcal{L}_{\rm syn}(\theta)
    +
    \lambda_{\rm int}\mathcal{L}_{\rm int}(\gamma,\theta),
\]
with \(\alpha_{\rm phy},\alpha_{\rm syn}>0\). Within the PDE-constrained parameter-identification setting considered here, the cooperative bi-objective problem is written as
\begin{equation}\label{eq:generic-biobjective-hyco}
    \min_{(\gamma,\theta)\in\mathcal{X}}
    \Big(
        F_{\rm phy}(\gamma,\theta),
        F_{\rm syn}(\gamma,\theta)
    \Big).
\end{equation}

The purpose of \eqref{eq:generic-biobjective-hyco} is to keep the two
observation-specific criteria explicit. The physical criterion depends on
\(\mathcal{O}_{\rm phy}\), the synthetic criterion on
\(\mathcal{O}_{\rm syn}\), and both contain the same state-interaction
mechanism. 

The optimality of \eqref{eq:generic-biobjective-hyco} is understood in the sense of Pareto. We recall that \((\gamma^\star,\theta^\star)\in\mathcal{X}\) is Pareto optimal if
there is no \((\gamma,\theta)\in\mathcal{X}\) such that
\[
    F_{\rm phy}(\gamma,\theta)
    \leq
    F_{\rm phy}(\gamma^\star,\theta^\star),
    \qquad
    F_{\rm syn}(\gamma,\theta)
    \leq
    F_{\rm syn}(\gamma^\star,\theta^\star),
\]
with at least one strict inequality. We refer to
\cite{Jahn2004,miettinen1999nonlinear} for the standard vector-optimization
terminology.

For the computational scheme and the subsequent convergence analysis, we consider the weighted scalarization
\begin{equation}\label{eq:generic-scalarization}
    \min_{(\gamma,\theta)\in\mathcal{X}}
    J(\gamma,\theta)
    :=
    \omega_{\rm phy}F_{\rm phy}(\gamma,\theta)
    +
    \omega_{\rm syn}F_{\rm syn}(\gamma,\theta),
\end{equation}
where $\omega_{\rm phy},\omega_{\rm syn}\geq0$ and $\omega_{\rm phy}+\omega_{\rm syn}=1$.

\begin{proposition}[Weighted scalarization]\label{prop:scalarization}
If \(\omega_{\rm phy},\omega_{\rm syn}>0\), every global minimizer of \eqref{eq:generic-scalarization} is Pareto optimal for \eqref{eq:generic-biobjective-hyco}. With merely nonnegative weights, a global minimizer is in general only weakly Pareto efficient.
\end{proposition}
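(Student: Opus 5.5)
The argument is by contradiction, using only the definition of Pareto optimality recalled above together with the linear structure of \(J\) in \eqref{eq:generic-scalarization}.

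\emph{Positive weights.} Let \((\gamma^\star,\theta^\star)\) be a global minimizer of \(J\), and suppose it is not Pareto optimal. Then there is some \((\gamma,\theta)\in\mathcal{X}\) with \(F_{\rm phy}(\gamma,\theta)\le F_{\rm phy}(\gamma^\star,\theta^\star)\) and \(F_{\rm syn}(\gamma,\theta)\le F_{\rm syn}(\gamma^\star,\theta^\star)\), and at least one of the two inequalities is strict. Multiply the first inequality by \(\omega_{\rm phy}>0\) and the second by \(\omega_{\rm syn}>0\), then add them. Because both weights are strictly positive, the strict inequality stays strict after scaling. This gives \(J(\gamma,\theta)<J(\gamma^\star,\theta^\star)\), which contradicts global minimality. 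The point that needs care is that positivity of \emph{both} weights is what keeps the strict inequality strict. Finiteness of \(F_{\rm phy}\) and \(F_{\rm syn}\) is automatic, since they are sums of finite squared residuals and regularizers; we assume \(\mathcal{R}_{\rm phy}\) and \(\mathcal{R}_{\rm syn}\) are real-valued, so no \(\infty-\infty\) issue arises.

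\emph{Nonnegative weights.} We define weak Pareto efficiency in the standard way: there is no \((\gamma,\theta)\) with both \(F_{\rm phy}(\gamma,\theta)<F_{\rm phy}(\gamma^\star,\theta^\star)\) and \(F_{\rm syn}(\gamma,\theta)<F_{\rm syn}(\gamma^\star,\theta^\star)\). Suppose such a point exists. Since \(\omega_{\rm phy}+\omega_{\rm syn}=1\), at least one weight is positive, and the weighted sum of the two strict inequalities is therefore strict. This again contradicts minimality, so every global minimizer is weakly Pareto efficient.

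\emph{Sharpness of ``only weakly''.} Take \(\omega_{\rm syn}=0\), so \(J=F_{\rm phy}\). Any \((\gamma^\star,\theta)\) that minimizes \(F_{\rm phy}\) is then a global minimizer, and if \(\lambda_{\rm int}=0\) the component \(\theta\) is arbitrary. Choosing a \(\theta\) that is not optimal for \(\mathcal{L}_{\rm syn}\) produces a minimizer dominated by \((\gamma^\star,\theta')\) with \(\theta'\) better for \(\mathcal{L}_{\rm syn}\). So such a minimizer is weakly efficient but not Pareto optimal. A remark with this example is enough to justify the phrase ``in general''.
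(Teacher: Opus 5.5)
Your proof is correct and follows the paper's argument. The positive-weight case is the same contradiction, obtained by weighting and summing the domination inequalities. Your example with \(\omega_{\rm syn}=0\), \(\lambda_{\rm int}=0\) is a concrete instance of the paper's remark that a strict improvement in the unweighted objective need not change \(J\). You also verify weak Pareto efficiency explicitly (at least one weight is positive because \(\omega_{\rm phy}+\omega_{\rm syn}=1\)), a step the paper only asserts.
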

\begin{proof}
If a positive-weight minimizer were dominated, multiplying the two componentwise inequalities by the weights and adding them would strictly decrease \(J\), a contradiction. If a weight vanishes, strict improvement of the unweighted objective need not change \(J\), which precludes the stronger conclusion.
\end{proof}

\begin{remark}
The result concerns global scalarized minima. Theorem~\ref{thm:Alternating:Algorithm} establishes mixed criticality, not global minimization, and hence does not imply Pareto optimality. Moreover, nonconvex weighted sums may miss unsupported Pareto points \cite{miettinen1999nonlinear,Jahn2004}.
\end{remark}

\subsection{Shared-observation alternating core}

We distinguish two computational regimes.

\medskip
\noindent
{\it Shared observations.}
Assume first that
\(\mathcal{D}_{\rm phy}=\mathcal{D}_{\rm syn}=:\mathcal{D}\).
Then
\begin{align}
    \nonumber 
    J(\gamma,\theta)
    =&
    \omega_{\rm phy}F_{\rm phy}(\gamma,\theta)
    +
    \omega_{\rm syn}F_{\rm syn}(\gamma,\theta)\\ \label{def:J}   
    =&
    a_{\rm phy}\mathcal{L}_{\rm phy}(\gamma)
    +
    a_{\rm syn}\mathcal{L}_{\rm syn}(\theta)
    +
    \lambda_{\rm int}\mathcal{L}_{\rm int}(\gamma,\theta),
\end{align}
where
\(a_{\rm phy}:=\omega_{\rm phy}\alpha_{\rm phy}\) and
\(a_{\rm syn}:=\omega_{\rm syn}\alpha_{\rm syn}\).
We minimize \eqref{def:J} by alternating the physical and synthetic
parameter blocks, as represented in Figure~\ref{fig:alternating:algorithm}.

\begin{figure}[!h]
    \centering
    \includegraphics[width=0.95\linewidth]{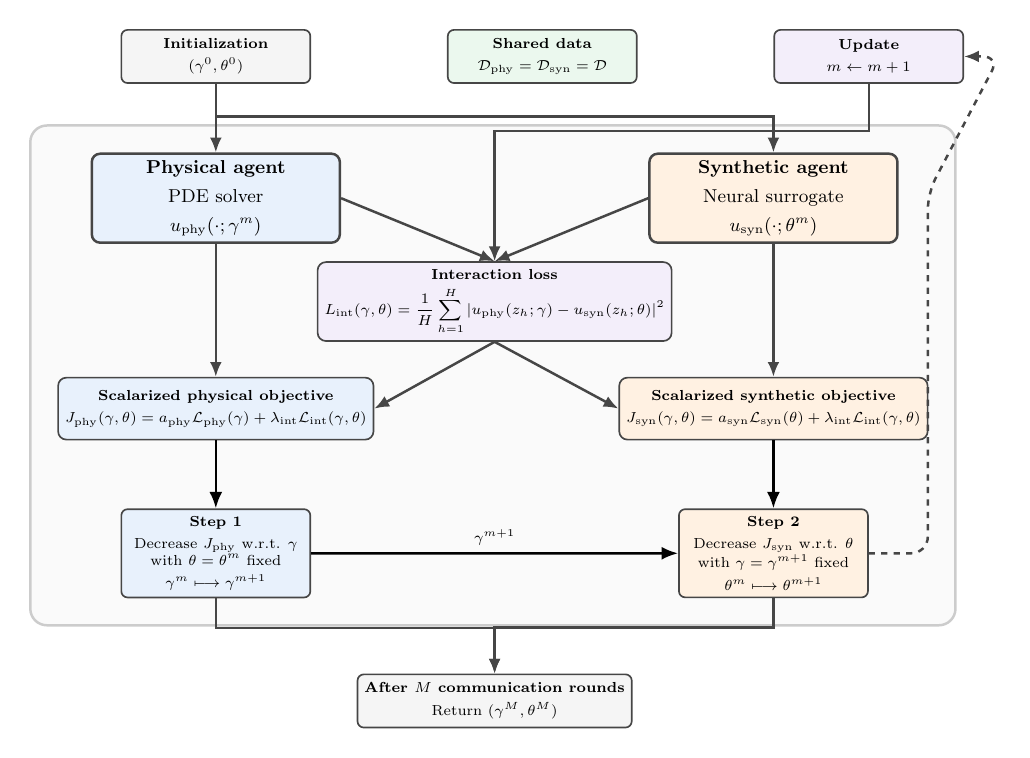}
    \caption{Alternating minimization of the scalarized Bi-HYCO functional in the
    shared-observation setting.}
    \label{fig:alternating:algorithm}
\end{figure}

If
\(\lambda_{\rm int}=0\) and \(a_{\rm phy},a_{\rm syn}>0\), the two blocks
decouple and the method reduces to
\begin{align}
    \label{eq:separated:problems}
    \min_{\gamma\in\mathcal{G}}\mathcal{L}_{\rm phy}(\gamma),
    \qquad
    \min_{\theta\in\Theta}\mathcal{L}_{\rm syn}(\theta).
\end{align}

\subsection{Fragmented-observation decentralized realization}
We next consider the case
\(\mathcal{D}_{\rm phy}\neq\mathcal{D}_{\rm syn}\). In this setting, the centralized formulation \eqref{def:J} is no longer directly available, since the two data-dependent losses are evaluated from datasets that are held separately. We therefore introduce a decentralized realization of the scalarized problem in which the observational information remains distributed.

More precisely, we consider two local computational units, each maintaining a copy of the parameter pair
\((\gamma,\theta)\). Unit \(1\) uses \(\mathcal{D}_{\rm phy}\), whereas unit \(2\)
has access to \(\mathcal{D}_{\rm syn}\). 

Using $a_{\rm phy}\coloneqq \omega_{\rm phy}\alpha_{\rm phy}$ and $a_{\rm syn}\coloneqq\omega_{\rm syn}\alpha_{\rm syn}$,
as in the shared-observation setting, the local objectives at communication round \(m\) are
\begin{align*}
    & J_1^m(\gamma,\theta) \coloneqq a_{\rm phy}\mathcal{L}_{\rm phy}(\gamma) +\lambda_{\rm int}\mathcal{L}_{\rm int}^m(\gamma,\theta), 
    \\
    & J_2^m(\gamma,\theta) \coloneqq a_{\rm syn}\mathcal{L}_{\rm syn}(\theta) +\lambda_{\rm int}\mathcal{L}_{\rm int}^m(\gamma,\theta).
\end{align*}

Starting from a common pair \((\gamma^m,\theta^m)\) provided by a coordinator, each unit performs \(E\) local updates of both parameter blocks according to \(J_i^m\). Although only one block has a direct data term on each unit, \(\mathcal L_{\rm int}^m\) couples both blocks and transmits an update signal to the other block. Both local parameter copies are therefore returned and aggregated as \(x^{m+1}=w_1x_1^{m,E}+w_2x_2^{m,E}\), where \(x=(\gamma,\theta)\), \(w_i\geq0\), and \(w_1+w_2=1\).

The complete procedure is summarized in Figure~\ref{fig:decentralized:algorithm}. This combination of local optimization and parameter aggregation is inspired by decentralized and federated optimization schemes \cite{li2019convergence,mcmahan2017communication}; here, the distinctive feature is the explicit state-level interaction between the physical and synthetic representations.

\begin{figure}[!h]
    \centering
    \includegraphics[width=1.00\linewidth]{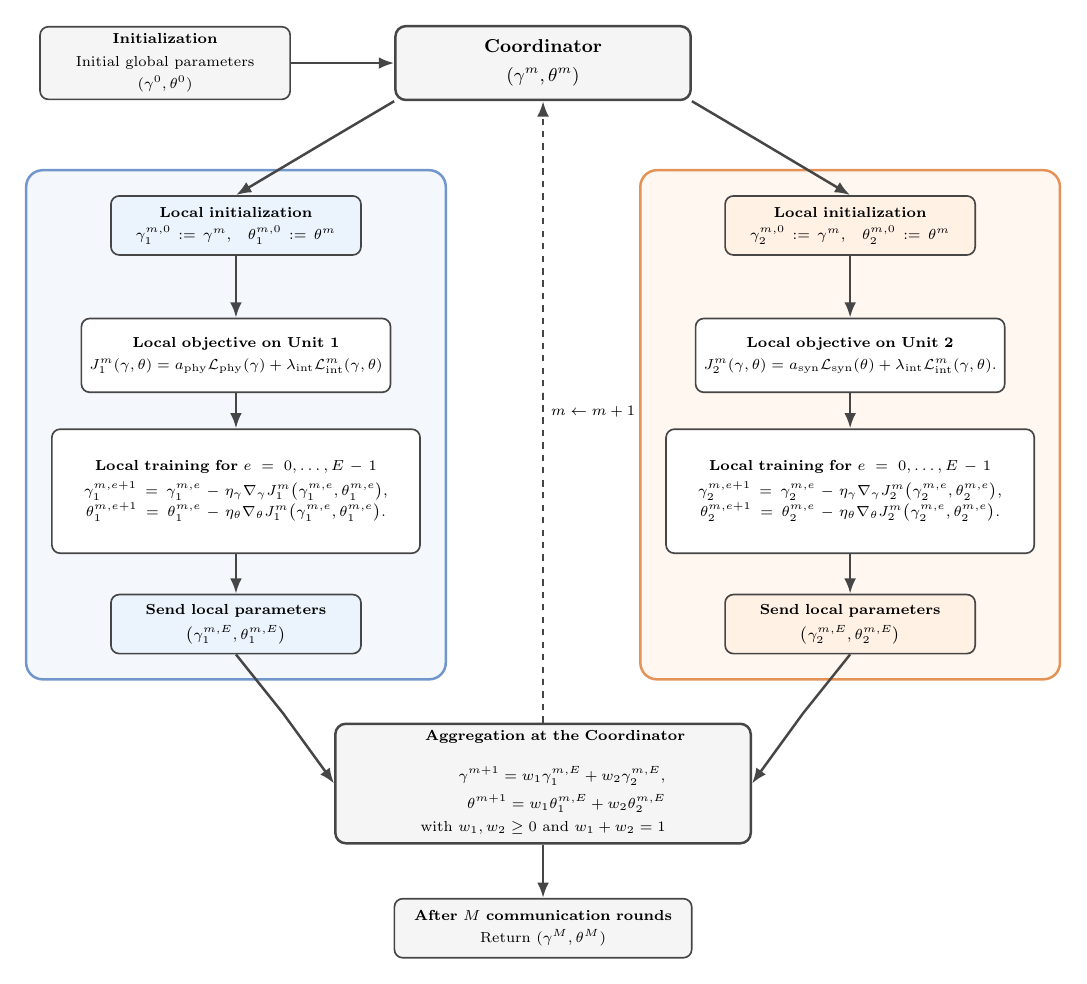}
    \caption{Decentralized realization for fragmented observations. Each local unit updates a copy of the parameter pair using its available observations and the interaction term. The coordinator then averages the updated parameter
    pairs. When \(\lambda_{\rm int}=0\), the explicit state-interaction term
    is absent, whereas coordinator aggregation remains active.}
    \label{fig:decentralized:algorithm}
\end{figure}

Setting \(\lambda_{\rm int}=0\) removes this state-level coupling while leaving the fragmented-data architecture, local optimization, and coordinator aggregation unchanged. This provides an internal reference for assessing the contribution of the interaction mechanism within the same decentralized reconstruction procedure.

The fragmented-observation scheme is a computational extension of the shared-observation formulation and is not covered by the convergence result of Section~\ref{section:convergence:result}. The latter concerns the deterministic shared-observation setting with fixed interaction points. The behavior of the decentralized scheme under fragmented observations is instead investigated numerically in Sections~\ref{sec:elliptic} and~\ref{sec:annular-ns}.

\subsection{Aggregate interpretation}

The decentralized scheme admits a useful interpretation in relation to Federated Averaging (FedAvg) \cite{li2019convergence,mcmahan2017communication}. At 
communication round $m$, both units start from the parameter pair \(x^m=(\gamma^m,\theta^m)\), perform $E$ local updates according to their respective objectives $J_i^m$, and return $x_i^{m,E}$. The coordinator then forms $x^{m+1}=w_1x_1^{m,E}+w_2x_2^{m,E}$, with $w_i\geq0$ and $w_1+w_2=1$.

This has the parameter-aggregation structure of FedAvg, with the additional feature that the local objectives are coupled through the common state-interaction 
term. To make this connection explicit, define the round-dependent aggregate functional
\[
    \overline J^{\,m}
    :=w_1J_1^m+w_2J_2^m
    =w_1a_{\rm phy}\mathcal L_{\rm phy}
     +w_2a_{\rm syn}\mathcal L_{\rm syn}
     +\lambda_{\rm int}\mathcal L_{\rm int}^m .
\]

Thus, although the observations remain distributed, the aggregate functional combines both data-dependent losses with the state-level coupling between the 
physical and synthetic representations.

The relation is exact when $E=1$. Indeed, if both units perform one gradient step with a common step size $\eta$, namely $x_i^{m,1}=x^m-\eta\nabla J_i^m(x^m)$, then aggregation gives
\[
    x^{m+1}
    =x^m-\eta\sum_{i=1}^2w_i\nabla J_i^m(x^m)
    =x^m-\eta\nabla\overline J^{\,m}(x^m).
\]

Hence, one local update followed by aggregation coincides with a gradient step for the aggregate functional.

For $E>1$, instead, the local iterates evolve along different trajectories before being aggregated. The coordinator update can then be viewed schematically as
\[
    x^{m+1}
    =x^m-\eta E\nabla\overline J^{\,m}(x^m)+r_{\rm drift}^m,
\]
where $r_{\rm drift}^m$ accounts for the discrepancy generated by evaluating the local gradients along different local trajectories. This is the usual client-drift effect associated with FedAvg \cite{li2019convergence}.

This interpretation is heuristic. A rigorous convergence analysis can be developed following standard arguments for FedAvg-type methods \cite{li2019convergence}, suitably adapted to account for the state-interaction term, and is omitted here.

\section{Convergence analysis of the alternating scheme}
\label{section:convergence:result}

This section studies the convergence of the scalarized alternating method represented in
Figure~\ref{fig:alternating:algorithm}. Here we consider unconstrained synthetic parameters
and set \(\Theta=\mathbb{R}^{d_{\Theta}}\). Accordingly, only the physical
variable is subject to an explicit constraint set.

Throughout this section, $\lambda_{\rm int}\geq 0$ is fixed. Consider $J$ defined as \eqref{def:J} and define the extended energy 
\begin{align}
    \label{def:extended:energy:Psi}
\Psi(\gamma,\theta):=J(\gamma,\theta) + i_{\mathcal{G}}(\gamma),
\end{align}
where $i_{\mathcal{G}}$ is the indicator function of $\mathcal{G}$, given by
\begin{align*}
    i_{\mathcal{G}}(\gamma)=0 \text{ if }\gamma\in \mathcal{G},\quad i_{\mathcal{G}}(\gamma)=\infty \text{ if }\gamma\notin \mathcal{G}.
\end{align*}

Moreover, since some activation functions may lead to non-smooth losses, in our analysis we use Clarke's generalized subdifferential \cite{Clarke1990}.

\begin{definition}
For a locally Lipschitz function $\varphi:\mathbb{R}^d\to \mathbb{R}$, the Clarke directional derivative at $x$ in the direction $v$ is 
\begin{align*}
    \varphi^\circ(x;v):=\limsup_{y\to x,\, t\searrow 0} \dfrac{\varphi (y+tv) - \varphi (y)}{t}. 
\end{align*}
In addition, the Clarke subdifferential is given by 
\begin{align*}
    \partial^\circ \varphi(x):=\{\xi \in \mathbb{R}^d\,:\, \varphi^\circ(x;v)\geq \langle \xi,v\rangle,\, \forall\, v\in \mathbb{R}^d \}.
\end{align*}
For each $\gamma\in \mathbb{R}^{d_{\mathcal{G}}}$ fixed, we set 
$J_\gamma(\theta)\coloneqq J(\gamma,\theta)$ for all $\theta\in \mathbb{R}^{d_\Theta}$. Then, we define the partial Clarke subdifferential with respect to $\theta$ as 
\begin{align*}
    \partial_{\theta}^\circ J(\gamma,\theta):=\partial^\circ J_\gamma (\theta),\quad \forall \, (\gamma,\theta) \in \mathbb{R}^{d_{\mathcal{G}}}\times \mathbb{R}^{d_{\Theta}}.
\end{align*}
\end{definition}

\begin{definition}%[Mixed subdifferential and mixed critical points]
For \((\gamma,\theta)\in \mathcal{G}\times\mathbb R^{d_\Theta}\), we define the mixed
subdifferential of the extended energy $\Psi$ defined in \eqref{def:extended:energy:Psi} by
\begin{align}
\label{def:partial:Psi:mix}
\partial_{\rm mix}\Psi(\gamma,\theta)
\coloneqq
\bigl(\nabla_\gamma J(\gamma,\theta)+N_{\mathcal{G}}(\gamma)\bigr)
\times
\partial^\circ_\theta J(\gamma,\theta),
\end{align}
where, for $\bar\gamma\in \mathcal{G}$, 
\begin{align*}
N_\mathcal{G}(\bar\gamma)
\coloneqq 
\left\{
\nu\in\mathbb R^{d_{\mathcal{G}}}:
\langle \nu,\gamma-\bar\gamma\rangle\leq 0
\quad\text{for all }\gamma\in \mathcal{G}
\right\}
\end{align*}
denotes the normal cone to $\mathcal{G}$ at $\bar\gamma$. Equivalently, $\xi\in\partial_{\rm mix}\Psi(\gamma,\theta)$ if and only if $\xi=(\xi_\gamma,\xi_\theta)$, with
\begin{align*}
    \xi_\gamma\in \nabla_\gamma J(\gamma,\theta)+N_{\mathcal{G}}(\gamma),
\qquad
\xi_\theta\in\partial^\circ_\theta J(\gamma,\theta).    
\end{align*}
A point $(\gamma^\star,\theta^\star)\in \mathcal{G}\times\mathbb R^{d_\Theta}$ is called a mixed critical point of $\Psi$ if $0\in\partial_{\rm mix}\Psi(\gamma^\star,\theta^\star)$, i.e., 
\begin{align*}
    0\in \nabla_\gamma J(\gamma^\star,\theta^\star)+N_\mathcal{G}(\gamma^\star) \quad\text{ and }\quad 0\in \partial^\circ_\theta J(\gamma^\star,\theta^\star).
\end{align*}
\end{definition}

The analysis below concerns the scalarized shared-observation Bi-HYCO iteration with fixed interaction points. Given $(\gamma^m,\theta^m)$, the physical variable is updated by
\begin{align*}
    \gamma^{m+1}:=\text{Proj}_{\mathcal{G}} (\gamma^m - \eta_\gamma \nabla_\gamma J(\gamma^m,\theta^m)),
\end{align*}
and the synthetic variable is updated by applying a descent method to 
\begin{align}
    \label{def:Gm}
    G^m(\theta):=a_{\rm syn}\mathcal{L}_{\rm syn}(\theta) + \lambda_{\rm int}\mathcal{L}_{\rm int}(\gamma^{m+1},\theta).
\end{align}

Following the general strategy developed for nonconvex and nonsmooth descent methods \cite{AttouchBolteSvaiter2013,BolteSabachTeboulle2014,BolteDaniilidisLewis2007}, in this paper, we use the following version of the Kurdyka-\L{}ojasiewicz property to obtain convergence from sufficient decrease and relative error estimates 

\begin{definition}
Let $\Psi:\mathbb {R}^{d_{\mathcal{G}}}\times\mathbb R^{d_\Theta}
\to \mathbb R\cup\{+\infty\}$ be proper and lower semicontinuous. We say that \(\Psi\) satisfies the
Kurdyka-{\L}ojasiewicz property at $(\gamma^\star,\theta^\star)\in \mathcal{G}\times\mathbb R^{d_\Theta}$, with respect to \(\partial_{\rm mix}\Psi\), if there exist \(\eta>0\), a neighborhood \(U\)
of $(\gamma^\star,\theta^\star)$, and a continuous concave function
\(\varphi:[0,\eta)\to\mathbb R_+\) such that
\begin{align}
    \label{KL-01}
    \varphi(0)=0,\qquad
    \varphi\in C^1(0,\eta),\qquad
    \varphi'(s)>0
    \quad\forall \,s\in(0,\eta),
\end{align}
and
\begin{align}
\label{KL-02}
\varphi'\bigl(\Psi(\gamma,\theta)-\Psi(\gamma^\star,\theta^\star)\bigr)
\operatorname{dist}\bigl(0,\partial_{\rm mix}\Psi(\gamma,\theta)\bigr)
\geq 1
\end{align}
for all \((\gamma,\theta)\in U\) satisfying
\[
\Psi(\gamma^\star,\theta^\star)<\Psi(\gamma,\theta)<\Psi(\gamma^\star,\theta^\star)+\eta.
\]
We say that \(\Psi\) is a KL function, with respect to
\(\partial_{\rm mix}\Psi\), if it satisfies the above property at every mixed
critical point in its domain.
\end{definition}

\noindent Moreover, we make the following assumptions.
\begin{itemize}
    \item[\bf (A1)] Let $\mathcal{G}\subset \mathbb{R}^{d_{\mathcal{G}}}$ be a nonempty, closed and convex set. Suppose that $\mathcal{L}_{\rm phy}\in C^1$ on a neighborhood of $\mathcal{G}$. In addition, assume that $\mathcal{L}_{\rm syn}$ is proper, bounded from below, and locally Lipschitz. Assume also that $\mathcal{L}_{\rm int}$ is locally Lipschitz with respect to $\theta$, and $C^1$ with respect to $\gamma$. We also assume that $J$ is locally Lipschitz on bounded subsets of $G\times \mathbb{R}^{d_{\Theta}}$, and the Clarke subdifferential mapping $(\gamma,\theta)\mapsto \partial_\theta^\circ J(\gamma,\theta)$ is outer semicontinuous on the initial sublevel set. Moreover, \(\Psi\) is proper, lower semicontinuous, and satisfies the Kurdyka-{\L}ojasiewicz property with respect to \(\partial_{\rm mix}\Psi\).
    \item[\bf (A2)] The initial sublevel set 
    \begin{align*}
        L_0:=\{ (\gamma,\theta)\in \mathcal{G}\times \mathbb{R}^{d_{\Theta}}\,:\, \Psi(\gamma,\theta)\leq \Psi(\gamma^0,\theta^0)\}
    \end{align*}
    is bounded.
    \item[\bf (A3)] There exist constants $\ell_{\mathcal{G}},\ell_{\mathcal{G}\Theta}>0$ such that, on $L_0$, we have
    \begin{align*}
        \|\nabla_{\gamma} J(\gamma,\theta) -\nabla_{\gamma} J(\hat{\gamma},\theta) \| \leq \ell_{\mathcal{G}} \|\gamma-\hat{\gamma}\|,\quad \forall \,(\gamma,\theta),(\hat{\gamma},\theta)\in \mathcal{G}\times \mathbb{R}^{d_{\Theta}},\\
        \lambda_{\rm int}\| \nabla_{\gamma} \mathcal{L}_{\rm int}(\gamma,\theta) - \nabla_{\gamma} \mathcal{L}_{\rm int}(\gamma,\hat{\theta}) \|\leq \ell_{\mathcal{G}\Theta} \|\theta-\hat{\theta}\|,\quad \forall\, (\gamma,\theta),(\gamma,\hat{\theta})\in \mathcal{G}\times \mathbb{R}^{d_{\Theta}}.
    \end{align*}
    \item[\bf (A4)] The physical step-size satisfies $0<\eta_\gamma < 2/\ell_{\mathcal{G}}$.
    \item[\bf (A5)] There exist $a_1,a_2>0$ such that 
    \begin{align*}
        G^m(\theta^{m+1})\leq G^m(\theta^m)-a_1\|\theta^{m+1} - \theta^m\|^2,\quad \forall\, m\in \mathbb{N}^*,\\
        \operatorname{dist}(0,\partial^\circ G^m(\theta^{m+1}))\leq a_2 \|\theta^{m+1} - \theta^m\|,\quad \forall m\in \mathbb{N}^*.
    \end{align*}    
\end{itemize}

We then have the following convergence result for the scalarized Alternating Algorithm in the shared-observation setting.

\begin{theorem}
    \label{thm:Alternating:Algorithm}
    Assume {\bf (A1)-(A5)}. Let $\{(\gamma^m,\theta^m)\}_{m\geq 0}$ be generated by the Alternating Algorithm. Then,
    \begin{itemize}
        \item There exists $c>0$ such that 
        \begin{align}
            \label{eq:T1.0}
            \Psi(\gamma^m,\theta^m) - \Psi(\gamma^{m+1},\theta^{m+1}) \geq c\left( \|\gamma^{m+1}-\gamma^{m}\|^2 + \|\theta^{m+1}-\theta^m\|^2 \right).
        \end{align}

        In particular, 
        \begin{align}
            \label{eq:T1.1}
            \sum_{m=0}^{\infty} \|\gamma^{m+1}-\gamma^m\|^2 <\infty\text{ and }\sum_{m=0}^{\infty} \|\theta^{m+1}-\theta^m\|^2 <\infty.
        \end{align}
        \item There exists $C>0$ such that 
        \begin{align}
            \label{eq:T2}
            \text{dist}(0,\partial_{\rm mix} \Psi (\gamma^{m+1},\theta^{m+1})) \leq C \left( \|\gamma^{m+1} - \gamma^m\| + \| \theta^{m+1}-\theta^m \| \right).
        \end{align}

        \item The sequence $\{(\gamma^m,\theta^m)\}_{m\geq 0}$ has finite length:
        \begin{align}
            \label{eq:T3}
            \sum_{m=0}^{\infty} \left( \|\gamma^{m+1}-\gamma^{m}\| + \|\theta^{m+1}-\theta^m\| \right)<+\infty.
        \end{align}

        Consequently, the whole sequence converges to a mixed critical point $(\gamma^\star,\theta^\star)$ of $\Psi$.
    \end{itemize}
\end{theorem}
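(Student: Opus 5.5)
We follow the Attouch--Bolte--Svaiter / Bolte--Sabach--Teboulle scheme: establish a sufficient decrease inequality and a relative error bound for the mixed subdifferential, then combine them with the KL property \eqref{KL-02} to obtain finite length.

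\emph{Step 1 (sufficient decrease).} Split the decrease across the two half-steps:
\[
\Psi(\gamma^m,\theta^m)-\Psi(\gamma^{m+1},\theta^{m+1})=\bigl[J(\gamma^m,\theta^m)-J(\gamma^{m+1},\theta^m)\bigr]+\bigl[J(\gamma^{m+1},\theta^m)-J(\gamma^{m+1},\theta^{m+1})\bigr].
\]
The second bracket equals $G^m(\theta^m)-G^m(\theta^{m+1})$, because the $\gamma$-only term $a_{\rm phy}\mathcal L_{\rm phy}(\gamma^{m+1})$ cancels. By (A5) it is at least $a_1\|\theta^{m+1}-\theta^m\|^2$.

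For the first bracket, since $\gamma^m\in\mathcal G$ and $\gamma^{m+1}$ is a projection, the variational inequality for $\mathrm{Proj}_{\mathcal G}$ gives
\[
\langle\nabla_\gamma J(\gamma^m,\theta^m),\gamma^{m+1}-\gamma^m\rangle\le-\eta_\gamma^{-1}\|\gamma^{m+1}-\gamma^m\|^2.
\]
The descent lemma with the $\gamma$-Lipschitz constant $\ell_{\mathcal G}$ from (A3) then yields a decrease of at least $(\eta_\gamma^{-1}-\ell_{\mathcal G}/2)\|\gamma^{m+1}-\gamma^m\|^2$. This constant is positive by (A4).

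One care point: (A3) holds on $L_0$, so we must check that the segment between $\gamma^m$ and $\gamma^{m+1}$ stays in a region where the Lipschitz bound applies. We handle this inductively, using that the iterates remain in $L_0$ thanks to the monotone decrease. If needed, we replace $L_0$ by a bounded neighborhood, using convexity of $\mathcal G$. Taking $c=\min(a_1,\eta_\gamma^{-1}-\ell_{\mathcal G}/2)$ gives \eqref{eq:T1.0}. Summing the telescoping series and using that $\Psi$ is bounded below on the bounded set $L_0$ (by (A1) and (A2), with $\mathcal L_{\rm int}\ge0$) gives \eqref{eq:T1.1}.

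\emph{Step 2 (relative error).} We build an explicit element of $\partial_{\rm mix}\Psi(\gamma^{m+1},\theta^{m+1})$.

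For the $\gamma$-block, the projection optimality condition reads
\[
-\eta_\gamma^{-1}(\gamma^{m+1}-\gamma^m)-\nabla_\gamma J(\gamma^m,\theta^m)\in N_{\mathcal G}(\gamma^{m+1}).
\]
Hence $\xi_\gamma:=\nabla_\gamma J(\gamma^{m+1},\theta^{m+1})-\nabla_\gamma J(\gamma^m,\theta^m)-\eta_\gamma^{-1}(\gamma^{m+1}-\gamma^m)$ is admissible. We bound it by inserting the intermediate point $(\gamma^{m+1},\theta^m)$ and applying both inequalities in (A3). Here $\nabla_\gamma J$ differs in $\theta$ only through $\lambda_{\rm int}\nabla_\gamma\mathcal L_{\rm int}$, so the second inequality applies. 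This gives
\[
\|\xi_\gamma\|\le(\ell_{\mathcal G}+\eta_\gamma^{-1})\|\gamma^{m+1}-\gamma^m\|+\ell_{\mathcal G\Theta}\|\theta^{m+1}-\theta^m\|.
\]

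For the $\theta$-block, $J(\gamma^{m+1},\cdot)$ and $G^m$ differ by a constant, so $\partial^\circ_\theta J(\gamma^{m+1},\theta^{m+1})=\partial^\circ G^m(\theta^{m+1})$. The second inequality of (A5) then bounds the distance. Together these give \eqref{eq:T2}.

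\emph{Step 3 (limit points and finite length).} By (A2) the sequence is bounded, so its set $\omega$ of cluster points is nonempty and compact. Step 1 gives $\|x^{m+1}-x^m\|\to0$, so $\omega$ is connected.

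Every cluster point is mixed critical. Along a subsequence, the $\gamma$-part passes to the limit by continuity of $\nabla_\gamma J$ and closedness of the graph of $N_{\mathcal G}$. The $\theta$-part passes to the limit using the assumed outer semicontinuity of $\partial^\circ_\theta J$ together with \eqref{eq:T2}.

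We also need $\Psi$ constant on $\omega$. Here $\Psi(x^m)$ is decreasing and bounded below, hence converges, and its limit coincides with $\Psi$ on $\omega$ by continuity of $J$ (local Lipschitz on bounded sets) and closedness of $\mathcal G$.

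We then invoke the uniformized KL lemma of Bolte--Sabach--Teboulle on the compact set $\omega$. Its proof uses only the pointwise KL property and compactness, so it carries over verbatim to $\partial_{\rm mix}$. We first dispose of the trivial case in which $\Psi(x^m)$ reaches the limit value in finitely many steps. Otherwise, for large $m$, concavity of $\varphi$ combined with Steps 1 and 2 gives
\[
\varphi(\Delta_m)-\varphi(\Delta_{m+1})\ge\frac{c\,\|x^{m+1}-x^m\|^2}{C\,(\|x^m-x^{m-1}\|_1)},
\]
where $\Delta_m=\Psi(x^m)-\Psi^\star$ and $\|x^m-x^{m-1}\|_1$ is the sum of the two block increments. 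Note that the relative error bound at index $m$ involves $x^m-x^{m-1}$. The AM--GM inequality $2\sqrt{ab}\le a+b$ then converts this into a summable telescoping bound, which yields \eqref{eq:T3}. Finite length makes the sequence Cauchy, so it converges to a point of $\omega$, which is a mixed critical point.

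\emph{Main obstacle.} The delicate part is the subsequential criticality for the Clarke $\theta$-block. Clarke subdifferentials of nonsmooth losses are not continuous, so the outer semicontinuity assumption in (A1) must be applied carefully along the subsequence, with the convergent subgradients taken from \eqref{eq:T2}.
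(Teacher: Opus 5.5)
Your proposal is correct and follows the paper's proof essentially step for step. It uses the projection variational inequality, the descent lemma and \textbf{(A5)} for sufficient decrease; the projection optimality condition with \textbf{(A3)} and \textbf{(A5)} for the relative error; closed-graph and outer-semicontinuity arguments for criticality of cluster points; and the uniformized KL lemma with concavity of $\varphi$ and $2\sqrt{uv}\le u+v$ for finite length. Your extra care about keeping the segment $[\gamma^m,\gamma^{m+1}]\times\{\theta^m\}$ inside $L_0$, so that \textbf{(A3)} applies, addresses a point the paper passes over silently, and a continuity argument along that segment does close it.
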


\begin{proof}
Write \(x^m=(\gamma^m,\theta^m)\) and \(\Delta\gamma^m=\gamma^{m+1}-\gamma^m\), \(\Delta\theta^m=\theta^{m+1}-\theta^m\).

\emph{Step 1: sufficient decrease.}
The variational characterization of the projected physical step gives
\[
\langle\nabla_\gamma J(\gamma^m,\theta^m),\Delta\gamma^m\rangle
\leq-\eta_\gamma^{-1}\|\Delta\gamma^m\|^2.
\]
The descent lemma from \textbf{(A3)} and \textbf{(A4)} therefore yields
\[
\Psi(\gamma^m,\theta^m)-\Psi(\gamma^{m+1},\theta^m)
\geq c_\gamma\|\Delta\gamma^m\|^2,\qquad
c_\gamma:=\eta_\gamma^{-1}-\ell_{\mathcal G}/2>0.
\]
For fixed \(\gamma^{m+1}\), \(J(\gamma^{m+1},\theta)\) differs from \(G^m(\theta)\) only by a constant. Hence \textbf{(A5)} gives
\[
\Psi(\gamma^{m+1},\theta^m)-\Psi(\gamma^{m+1},\theta^{m+1})
\geq a_1\|\Delta\theta^m\|^2.
\]
Adding the inequalities proves \eqref{eq:T1.0} with \(c=\min\{c_\gamma,a_1\}\). Since \(\Psi(x^m)\) decreases and is bounded below, telescoping proves \eqref{eq:T1.1}.

\emph{Step 2: relative error.}
Optimality of the projected step provides \(w^{m+1}\in N_{\mathcal G}(\gamma^{m+1})\) such that
\[
w^{m+1}=-\eta_\gamma^{-1}\Delta\gamma^m-
\nabla_\gamma J(\gamma^m,\theta^m).
\]
Set \(r_\gamma^{m+1}=\nabla_\gamma J(\gamma^{m+1},\theta^{m+1})+w^{m+1}\). By \textbf{(A3)},
\[
\|r_\gamma^{m+1}\|\leq
(\eta_\gamma^{-1}+\ell_{\mathcal G})\|\Delta\gamma^m\|
+\ell_{\mathcal G\Theta}\|\Delta\theta^m\|.
\]
By \textbf{(A5)}, choose \(r_\theta^{m+1}\in\partial^\circ G^m(\theta^{m+1})
=\partial_\theta^\circ J(\gamma^{m+1},\theta^{m+1})\) with
\(\|r_\theta^{m+1}\|\leq a_2\|\Delta\theta^m\|\). Thus
\((r_\gamma^{m+1},r_\theta^{m+1})\in\partial_{\rm mix}\Psi(x^{m+1})\), and \eqref{eq:T2} follows.

\emph{Step 3: critical accumulation points.}
All iterates lie in the compact set \(L_0\) by \textbf{(A2)} and lower semicontinuity. If \(x^{m_j}\to\bar x\), square summability implies \(x^{m_j+1}\to\bar x\), while Step 2 supplies \(r^{m_j+1}\in\partial_{\rm mix}\Psi(x^{m_j+1})\) with \(r^{m_j+1}\to0\). The graph of \(N_{\mathcal G}\) is closed because \(\mathcal G\) is closed and convex, and the graph of \((\gamma,\theta)\mapsto\partial_\theta^\circ J(\gamma,\theta)\) is closed on \(L_0\) by the outer semicontinuity in \textbf{(A1)} \cite{Clarke1990}. Passing to the limit gives \(0\in\partial_{\rm mix}\Psi(\bar x)\).

\emph{Step 4: finite length and whole-sequence convergence.}
Let \(E_m=\Psi(x^m)\), \(E_m\downarrow E_\infty\), and
\(q_m=\|\Delta\gamma^m\|+\|\Delta\theta^m\|\). Step 1 gives
\[
E_m-E_{m+1}\geq \frac c2q_m^2.
\]
The compact accumulation set \(\Omega\) is nonempty, consists of mixed critical points by Step 3, and \(\Psi\equiv E_\infty\) on \(\Omega\). The uniformized KL lemma \cite[Lemma~3.6]{BolteSabachTeboulle2014}, applied to the KL inequality assumed in \textbf{(A1)} with the same mapping \(\partial_{\rm mix}\Psi\), supplies a concave desingularizing function \(\varphi\) valid near \(\Omega\). Because \(\operatorname{dist}(x^m,\Omega)\to0\) and \(E_m\to E_\infty\), for all sufficiently large \(m\), either the sequence has already become stationary or
\[
\varphi'(E_m-E_\infty)
\operatorname{dist}(0,\partial_{\rm mix}\Psi(x^m))\geq1.
\]
Using Step 2 with the index shifted by one, there is \(b>0\) such that
$$\operatorname{dist}(0,\partial_{\rm mix}\Psi(x^m))\leq bq_{m-1}.$$

Concavity of \(\varphi\) and the decrease estimate then imply, for some \(K>0\),
\[
q_m^2\leq Kq_{m-1}\bigl[\varphi(E_m-E_\infty)-
\varphi(E_{m+1}-E_\infty)\bigr].
\]
The inequality \(2\sqrt{uv}\leq u+v\) yields
\[
q_m\leq\tfrac12q_{m-1}+\tfrac K2
\bigl[\varphi(E_m-E_\infty)-\varphi(E_{m+1}-E_\infty)\bigr].
\]
Summing and telescoping proves \(\sum_mq_m<\infty\), which is \eqref{eq:T3}. Thus \(x^m\) is Cauchy and converges in the closed set \(L_0\); Step 3 shows that its limit is mixed critical.
\end{proof}

\begin{remark}
The case \(\lambda_{\rm int}=0\) is included in
Theorem~\ref{thm:Alternating:Algorithm}. In this case, the alternating method reduces
to the two independent problems in \eqref{eq:separated:problems}.
\end{remark}

\begin{remark}[Scope and numerical solvers]
The theorem concerns the deterministic shared-observation core with fixed interaction points. Smooth \(\tanh\) and softplus parameterizations are compatible with the regularity assumptions, but the actual use of Adam or L-BFGS does not by itself guarantee the descent and relative-error conditions in \textbf{(A5)}. The result is an optimization-convergence statement: it neither identifies \(\gamma^\dagger\), implies global or Pareto optimality, nor covers the decentralized fragmented realization. It nevertheless supplies a rigorous descent-and-convergence foundation for the cooperative core.
\end{remark}

\section{Elliptic transmission inverse problem: interaction and Bi-Objective trade-offs}\label{sec:elliptic}

This experiment asks whether Bi-HYCO can recover parameters and states from spatially fragmented observations, whether state interaction contributes beyond coordinator aggregation, how scalarization changes the model-specific compromise, and how the reconstruction compares with PINN/XPINN references.

As a first numerical example, we consider a stationary inverse problem for an elliptic transmission equation on a domain with an irregular outer boundary and an internal interface. The diffusion coefficient is piecewise constant, taking different values on two subdomains. This experiment is designed to assess whether the Bi-HYCO strategy can recover the physical parameters and reconstruct the state when the available observations are spatially fragmented and when the interface geometry is non-trivial.

We use two scalar coefficients in order to isolate the effect of fragmented
observations and state interaction while keeping the parameter error and the
transmission diagnostics directly quantifiable. 

Complete implementation details for this experiment are provided in Appendix \ref{supp:elliptic-hyco-implementation}. The corresponding Python code is available on GitHub \cite{Github}. 

\subsection{PDE model and inverse problem} 

Let \(\Omega\) be a bounded domain in \(\mathbb{R}^2\) with an irregular outer boundary defined in polar coordinates by
\[
R_{\text{out}}(\theta) := 1.0 + 0.12\cos(3\theta) + 0.08\sin(5\theta),
\]
and an internal interface \(\Gamma\) given by
\[
R_{\text{int}}(\theta) := 0.42\bigl(1.0 + 0.22\cos(5\theta)\bigr).
\]

The interface splits \(\Omega\) into an inner subdomain \(\Omega_1 = \{\mathbf{x}: r < R_{\text{int}}(\theta)\}\) and an outer shell \(\Omega_2 = \Omega \setminus \overline{\Omega_1}\). We consider the elliptic boundary value problem

\begin{equation}\label{eq:elliptic-problem-irregular}
    \begin{array}{ll}
        \begin{cases}
            -\operatorname{div}\bigl(a(\mathbf{x})\nabla u(\mathbf{x})\bigr) = f(\mathbf{x}), & \mathbf{x}\in\Omega,\\
            u = 0, & \mathbf{x}\in\partial\Omega,
        \end{cases}
    & \quad a(\mathbf{x}) \coloneqq \begin{cases}
a_1, & \mathbf{x}\in\Omega_1,\\
a_2, & \mathbf{x}\in\Omega_2,
\end{cases} 
\\[15pt]  
& 
\quad f(\mathbf{x})\coloneqq 1 + 0.5\sin(\pi x)\cos(\pi y).
\end{array}
\end{equation}

The ground-truth parameters are $a_1^\dagger\coloneqq 2.0$ and $a_2^\dagger\coloneqq 1.0$. The inverse problem consists in recovering \(\gamma = (a_1,a_2)\in(0,\infty)^2\) from partial observations of \(u\). 

The observation set is partitioned into two local datasets. The first data-holding unit receives measurements in $\Omega_1$, whereas the second receives measurements in $\Omega_2$.

Since the diffusion coefficient \(a\) jumps across the internal interface \(\Gamma\), the boundary value problem \eqref{eq:elliptic-problem-irregular} is naturally interpreted as an elliptic transmission problem. Its weak formulation reads: find \(u \in H^1_0(\Omega)\) such that
\begin{align}
\label{variational:formulation:elliptic}
\int_{\Omega} a(\mathbf{x})\,\nabla u\cdot\nabla v\,\mathrm{d}\mathbf{x}
= \int_{\Omega} f(\mathbf{x})\,v\,\mathrm{d}\mathbf{x}, \qquad \forall\, v\in H^1_0(\Omega).
\end{align}

In this variational setting, the transmission conditions are not imposed explicitly. The continuity of the state, \([u]_\Gamma = 0\), is automatically ensured by the trace regularity of functions in \(H^1_0(\Omega)\). The continuity of the normal flux, \([a\,\partial_n u]_\Gamma = 0\), follows naturally from integration by parts and is therefore satisfied in the distributional sense. For a rigorous treatment of this variational framework, we refer to \cite{dautray1988mathematical}.

\subsection{Bi-HYCO implementation}

The physical component is obtained from a finite-element discretization of
\eqref{eq:elliptic-problem-irregular} on an unstructured Delaunay mesh, with
parameters listed in Table~\ref{tab:hyco_params} (see Appendix \ref{supp:elliptic-hyco-implementation});
the coefficient is assigned per triangle by its centroid, and homogeneous
Dirichlet conditions are imposed strongly. The assembly and solution are
implemented in PyTorch, rendering the physical model differentiable with
respect to \(a_1,a_2\). To avoid the inverse crime, the reference state
\(u^\dagger\) and observational data are generated on an independent finer mesh
(also reported in Table~\ref{tab:hyco_params}), while all reconstructions use
the coarser mesh described above. This two-mesh strategy prevents the
inversion from exploiting the same discretization used to produce the data.

Implementation details, including communication rounds, local updates, learning-rate schedules, architecture, and interaction-point sampling, are provided in Appendix \ref{supp:elliptic-hyco-implementation}.

\subsection{Sensitivity to observation noise}

To examine the sensitivity of Bi-HYCO to additive observation noise, we perturb the observations as follows.
\begin{equation}
    \widetilde y_i = y_i + \varepsilon_i, \quad \varepsilon_i \sim \mathcal{N}(0,\sigma^2),
\end{equation}
where the standard deviation is scaled relative to the clean data as $\sigma = \eta \cdot \operatorname{std}(\mathbf{y})$,
with \(\operatorname{std}(\mathbf{y})\) computed separately for the observations available to each local unit. The parameter \(\eta\) controls the relative noise level and is varied over \(\eta \in \{0.00, 0.05, 0.10, 0.15, 0.20\}\). 

\medskip
\noindent \textbf{Parameter recovery.}
Table~\ref{tab:elliptic-params} reports the estimated coefficients and their relative errors for the five noise levels. Here, the relative errors and the joint error are defined as
\begin{align*}
    \delta a_i \coloneqq \frac{|\hat a_i - a_i^\dagger|}{a_i^\dagger} \;\text{ for } i=1,2, \qquad e_{\gamma} := \sqrt{\delta a_1^2 + \delta a_2^2}.    
\end{align*}

\begin{table}[htbp]
\centering
\small
\begin{tabular}{lccccc}
\toprule
\multirow{2}{*}{Metric} & \multicolumn{5}{c}{\(\eta\)} \\
\cmidrule{2-6}
 & 0.00 & 0.05 & 0.10 & 0.15 & 0.20 \\
\midrule
$\hat a_1$ & 1.971312 & 1.998491 & 2.010475 & 2.001795 & 1.987880 \\
$\hat a_2$ & 1.001103 & 1.000469 & 1.001298 & 0.997622 & 0.999996 \\
$\delta a_1$ & 0.014344 & 0.000755 & 0.005237 & 0.000897 & 0.006060 \\
$\delta a_2$ & 0.001103 & 0.000469 & 0.001298 & 0.002378 & 0.000004 \\
$e_{\gamma}$ & 0.014386 & 0.000888 & 0.005396 & 0.002541 & 0.006060 \\
Run time (s) & 2161.63 & 2298.08 & 2407.73 & 2265.47 & 2181.89 \\
\bottomrule
\end{tabular}
\caption{Estimated diffusion coefficients and relative errors under increasing observation noise.}
\label{tab:elliptic-params}
\end{table}

In all cases, the joint parameter error remains below $1.5\%$, with the smallest value of $0.089\%$ achieved at $\eta=0.05$. The relative error in \(a_2\) remains below $2.4\times 10^{-3}$ for all five perturbation levels, while the error in \(a_1\) varies between $0.08\%$ and $1.43\%$. No monotone deterioration with respect to $\eta$ is observed in these runs. Since each noise level corresponds to a fixed realization, the nonmonotone behavior should be interpreted descriptively rather than as evidence that observational noise improves the reconstruction.

The five noise levels are distinct experimental conditions rather than statistical replicates; no confidence or uncertainty band is inferred from their dispersion.

\medskip 
\noindent \textbf{State reconstruction across subdomains.}
Table~\ref{tab:elliptic-solution-errors} shows that the two local units attain different levels of state accuracy. The relative error of a reconstructed
solution $u$ with respect to the true solution $u^\dagger$ on a subdomain $\omega$ is 
\begin{align*}
    e_u(\omega) := \frac{\|u-u^\dagger\|_{L^2(\omega)}}{\|u^\dagger\|_{L^2(\omega)}}.
\end{align*}

\begin{table}[htbp]
\centering
\small
\begin{tabular}{lccccc}
\toprule
\multirow{2}{*}{Metric} & \multicolumn{5}{c}{\(\eta\)} \\
\cmidrule{2-6}
 & 0.00 & 0.05 & 0.10 & 0.15 & 0.20 \\
\midrule
$e_{\rm phy}(\Omega)$ & 0.0030 & 0.0030 & 0.0030 & 0.0044 & 0.0032 \\
$e_{\rm syn}(\Omega)$ & 0.0821 & 0.0648 & 0.0614 & 0.0400 & 0.0664 \\
$e_{\rm phy}(\Omega_1)$ & 0.0013 & 0.0010 & 0.0018 & 0.0022 & 0.0010 \\
$e_{\rm phy}(\Omega_2)$ & 0.0039 & 0.0040 & 0.0038 & 0.0057 & 0.0042 \\
$e_{\rm syn}(\Omega_1)$ & 0.0266 & 0.0187 & 0.0150 & 0.0163 & 0.0156 \\
$e_{\rm syn}(\Omega_2)$ & 0.1102 & 0.0874 & 0.0833 & 0.0528 & 0.0901 \\
$\mathrm{Gap}_{\rm phy}$ & 0.0026 & 0.0030 & 0.0020 & 0.0035 & 0.0032 \\
$\mathrm{Gap}_{\rm syn}$ & -0.0836 & -0.0687 & -0.0683 & -0.0365 & -0.0745 \\
\bottomrule
\end{tabular}
\caption{Relative $L^2$ reconstruction errors and cross-subdomain error differences across noise levels.}
\label{tab:elliptic-solution-errors}
\end{table}
The solver-based reconstruction has a global relative error of order
\(10^{-3}\), whereas the synthetic reconstruction has an error of order
\(10^{-2}\). This difference is consistent with the fact that the former is
obtained by solving the discretized transmission problem for the recovered
coefficients, while the latter is an unconstrained state representation.

The quantities $\mathrm{Gap}_{\rm phy}:=
e_{\rm phy}(\Omega_2)-e_{\rm phy}(\Omega_1)$ and
$\mathrm{Gap}_{\rm syn}:=
e_{\rm syn}(\Omega_1)-e_{\rm syn}(\Omega_2)$ are differences between reconstruction errors on
the two subdomains. They are used only as spatial diagnostics and should not
be interpreted as statistical out-of-sample error estimates. The physical difference
is close to zero in all cases, so the finite-element reconstruction has
comparable accuracy in \(\Omega_1\) and \(\Omega_2\). The synthetic difference
is negative in the reported runs, indicating a smaller error in
\(\Omega_1\) than in \(\Omega_2\). This observation is compatible with
information exchange through the coupled procedure.

\medskip

\noindent \textbf{Sensitivity with respect to the scalarization weights.}
We also investigate the sensitivity of the reconstruction to the
scalarization weights in \eqref{eq:generic-scalarization}. To this end, we fix the noise level at
$\eta=0.20$ and consider
\[
\omega_{\rm phy}\in\{1.0,0.75,0.50,0.25,0.0\},
\qquad
\omega_{\rm syn}:=1-\omega_{\rm phy},
\]
while keeping all remaining algorithmic parameters fixed.

The weight $\omega_{\rm phy}$ determines the relative importance of the physical objective $F_{\rm phy}$ in the scalarized functional \eqref{eq:generic-scalarization}, while $\omega_{\rm syn}$ weights the synthetic objective $F_{\rm syn}$. 

\begin{table}[htbp]
\centering
\small
\begin{tabular}{lccccc}
\toprule
$(\omega_{\mathrm{phy}}, \omega_{\mathrm{syn}})$ & $\hat a_1$ & $\hat a_2$ & $\delta a_1$ & $\delta a_2$ & $e_\gamma$ \\
\midrule
(1.00, 0.00) & 1.964579 & 1.000530 & 0.017711 & 0.000530 & 0.017719 \\
(0.75, 0.25) & 2.032572 & 0.996496 & 0.016286 & 0.003504 & 0.016659 \\
(0.50, 0.50) & 1.987880 & 0.999996 & 0.006060 & 0.000004 &  0.006060 \\
(0.25, 0.75) & 2.211544 & 0.992827 & 0.105772 & 0.007173 & 0.106015 \\
(0.00, 1.00) & 1.260271 & 0.993013 & 0.369864 & 0.006987 & 0.369930 \\
\bottomrule
\end{tabular}
\caption{Sensitivity of the recovered coefficients to the scalarization
weights $(\omega_{\rm phy},\omega_{\rm syn})$ when $\eta=0.20$.
All remaining hyperparameters are kept fixed.}
\label{tab:weight-sensitivity}
\end{table}

The results in Table \ref{tab:weight-sensitivity} show that the reconstruction is sensitive to the relative weighting of the two criteria. Small parameter errors are obtained when the physical criterion carries a sufficiently large weight, while the reconstruction deteriorates as the scalarization becomes increasingly dominated by the synthetic criterion.

\medskip
\noindent\textbf{Vanishing-interaction ablation.}
To isolate the contribution of explicit state-level interaction, we also repeated the
elliptic experiment at \(\eta=0.20\) with \(\lambda_{\rm int}=0\), while leaving the
fragmented data distribution, local optimization procedure, and coordinator aggregation
unchanged. The joint parameter error increases from \(e_\gamma=0.00606\) for full Bi-HYCO
to \(e_\gamma=0.00896\) in the aggregation-only configuration, an increase by a factor of approximately \(1.48\) in this realization. The complete ablation results and discussion are reported in Appendix \ref{supp:elliptic-vanishing-interaction}.

\subsection{Comparison with Physics-Informed Neural Network Variants}

For comparison, we also compute standard PINN and XPINN reconstructions in the same noise-free setting. 

Figure~\ref{fig:solution_comparison} compares the reconstructed states and their absolute errors. In this noise-free realization, both Bi-HYCO states reproduce the main spatial structure of the reference, with the solver-based state attaining the smallest global error in Table~\ref{tab:hyco_pinn_compare}. The PINN and XPINN states show larger deviations in parts of the inner subdomain and near the interface, as highlighted by the zoomed views.

\begin{figure}[htbp]
    \centering
    \includegraphics[width=0.95\textwidth]{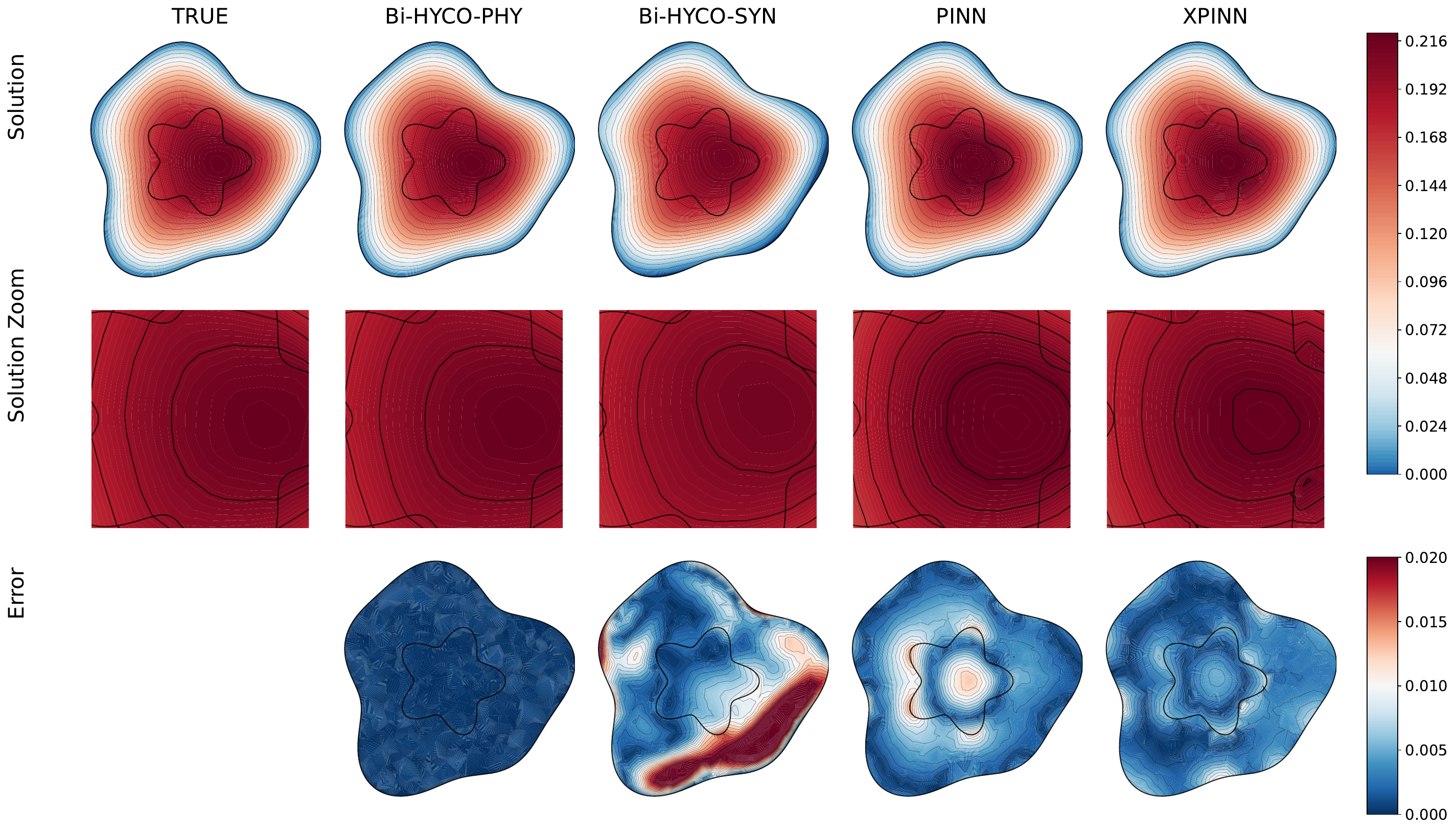}
    \caption{Solution reconstruction comparison at \(\eta=0.00\). Rows from top to bottom: global solution, zoomed solution (interface region), and absolute error. Columns from left to right: true solution, Bi-HYCO physics, Bi-HYCO synthetic, PINN, and XPINN. The bottom row uses a linear color scale with \(v_{\min}=0\) and \(v_{\max}\) set to the 98th percentile of all errors.}
    \label{fig:solution_comparison}
\end{figure}

Figure~\ref{fig:gradient_comparison} shows the corresponding gradient magnitudes. The solver-based Bi-HYCO reconstruction is visually closer to the reference in several regions, while PINN and XPINN exhibit larger local discrepancies near the interface. Since gradients are particularly sensitive to discretization and post-processing, these plots are interpreted only as qualitative diagnostics.

\begin{figure}[htbp]
    \centering
    \includegraphics[width=0.95\textwidth]{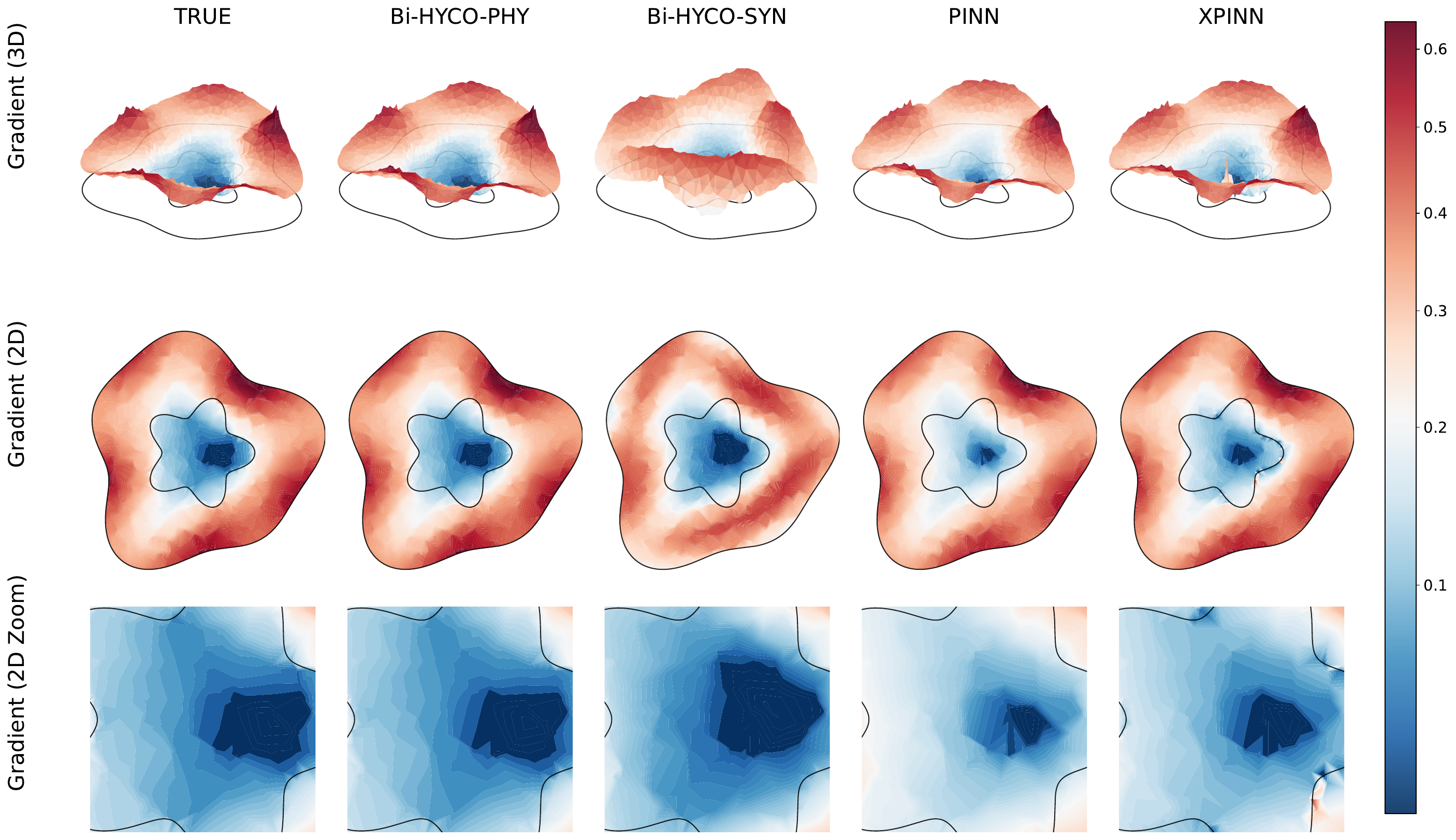}
    \caption{Gradient magnitude comparison at \(\eta=0.00\). Rows from top to bottom: three-dimensional perspective of the global gradient field, two-dimensional global view, and two-dimensional zoomed view near the interface. Columns from left to right: true solution, Bi-HYCO physics, Bi-HYCO synthetic, PINN, and XPINN. The first row employs a \(3D\) surface plot to highlight the spatial distribution of gradient magnitudes.}
    \label{fig:gradient_comparison}
\end{figure}

Table~\ref{tab:hyco_pinn_compare} summarizes the quantitative comparison. In this noise-free realization, Bi-HYCO achieves the lowest parameter-recovery error and its solver-based state yields the lowest global state error. The particularly small error of the physical Bi-HYCO state should, however, be interpreted in light of the fact that it is generated by the finite-element solver after coefficient identification. Therefore, the comparison does not isolate the effect of the optimization strategy from that of solver-based state reconstruction.

% Table 5: Comparison between HYCO, PINN and XPINN at η=0.00
\begin{table}[htbp]
\centering
\small
\setlength{\tabcolsep}{10pt}
\begin{tabular}{lccc}
\toprule
\multirow{2}{*}{Metric} & \multicolumn{3}{c}{Method} \\
\cmidrule{2-4}
                        & Bi-HYCO & PINN & XPINN \\
\midrule
$\hat a_1$ & 1.9713 & 1.0788 & 1.3579 \\
$\hat a_2$ & 1.0011 & 1.0481 & 1.0355 \\
$\delta a_1$ & 0.0143 & 0.4606 & 0.3210 \\
$\delta a_2$ & 0.0011 & 0.0481 & 0.0355 \\
$e_\gamma$ & 0.0144 & 0.4631 & 0.3230 \\
\midrule
$e_{\mathrm{phy}}(\Omega)$ & 0.0030 & \multirow{2}{*}{0.0378} & \multirow{2}{*}{0.0248} \\
$e_{\mathrm{syn}}(\Omega)$ & 0.0821 & & \\
$e_{\mathrm{phy}}(\Omega_1)$ & 0.0013 & \multirow{2}{*}{0.0370} & \multirow{2}{*}{0.0179} \\
$e_{\mathrm{syn}}(\Omega_1)$ & 0.0266 & & \\
$e_{\mathrm{phy}}(\Omega_2)$ & 0.0039 & \multirow{2}{*}{0.0385} & \multirow{2}{*}{0.0296} \\
$e_{\mathrm{syn}}(\Omega_2)$ & 0.1102 & & \\
$\mathrm{Gap}_{\mathrm{phy}}$ & 0.0026 & \multirow{2}{*}{0.0015} & \multirow{2}{*}{0.0116} \\
$\mathrm{Gap}_{\mathrm{syn}}$ & -0.0836 & & \\
\midrule
$[u]_{\mathrm{phy}}$ & 0.00004 & \multirow{2}{*}{0.00003} & \multirow{2}{*}{0.00005} \\
$[u]_{\mathrm{syn}}$ & 0.00003 & & \\
$[a\nabla u\cdot n]_{\mathrm{phy}}$ & 0.00174 & \multirow{2}{*}{0.0054} & \multirow{2}{*}{0.0744} \\
$[a\nabla u\cdot n]_{\mathrm{syn}}$ & 0.00143 & & \\
\midrule
Run time (s) & 2161.6 & 882.1 & 1019.8 \\
\bottomrule
\end{tabular}
\caption{Comparison of the reconstruction results obtained with Bi-HYCO, PINN and XPINN at noise level $\eta=0.00$}
\label{tab:hyco_pinn_compare}
\end{table}

Finally, Bi-HYCO requires approximately \(2.12\)-\(2.45\) times the wall-clock time of the PINN and XPINN implementations. This additional computational cost is accompanied by lower parameter-recovery and physical-state errors, so the relevance of the trade-off depends on whether computational cost or parameter accuracy is the primary criterion.

\section{Navier-Stokes Parameter Identification: A Nonlinear Fragmented-Observation Test}
\label{sec:annular-ns}

We next consider a time-dependent inverse problem in the annular domain with
inner and outer radii \(r_{\text{in}}=0.25\) and \(r_{\text{out}}=1.0\). The
unknown physical parameters are recovered from spatially fragmented
observations of the vorticity field. This example examines the same
reconstruction mechanism for a nonlinear evolution equation and a curved
geometry.

Complete implementation details are provided in Appendices \ref{supp:ns-hyco-implementation}-\ref{supp:ns-initialization}-\ref{supp:ns-pinn-implementation}. The
Python code is available on GitHub \cite{Github}.

\subsection{PDE model, observations and Bi-HYCO setup}

Let 
\begin{align*}
    \Omega = \{(r,\theta): r_{\text{in}} < r < r_{\text{out}},\ 0\le\theta<2\pi\}
\end{align*} 
be the annular domain, and \(T>0\). We employ the vorticity-streamfunction formulation in polar coordinates:
\begin{equation}
\label{eq:annular-ns}
\begin{cases}
    \partial_t\omega + \dfrac{1}{r}\partial_r(r u_r \omega) + \dfrac{1}{r}\partial_\theta(u_\theta \omega)
        - \nu \Delta \omega = \rho\, f(r,\theta), & (r,\theta,t)\in\Omega\times(0,T], \\
    -\Delta \psi = \omega, & (r,\theta,t)\in\Omega\times(0,T], \\
    u_r = \dfrac{1}{r}\partial_\theta\psi,\quad u_\theta = -\partial_r\psi, & (r,\theta,t)\in\Omega\times(0,T], \\
    \omega(r,\theta,0) = \omega_0(r,\theta), & (r,\theta)\in\Omega, \\
    \psi(r_{\text{in}},\theta,t) = \psi(r_{\text{out}},\theta,t) = 0, & \theta\in[0,2\pi), \\
    \omega(r_{\text{in}},\theta,t) = \omega(r_{\text{out}},\theta,t) = 0, & \theta\in[0,2\pi), \\
    \psi(r,0,t) = \psi(r,2\pi,t),\quad \omega(r,0,t) = \omega(r,2\pi,t), & r\in[r_{\text{in}},r_{\text{out}}].
\end{cases}
\end{equation}

Here \(\nu>0\) denotes the kinematic viscosity and \(\rho\) the amplitude of a prescribed external forcing. The condition \(\psi=0\) fixes the streamfunction on each circular boundary and enforces zero normal velocity. The condition \(\omega=0\) prescribes homogeneous wall vorticity. On the curved boundaries of the annulus, it is regarded here as the mathematical wall condition defining the numerical model, rather than as the general stress-free condition, which involves curvature-dependent terms. Angular periodicity is enforced by the last line of \eqref{eq:annular-ns}. The forcing term is designed to produce a characteristic two-ring vortex structure and is given by
\[
f(r,\theta) := \exp\!\left(-\frac{(r-0.62)^2}{0.13^2}\right)\sin(3\theta+0.25)
            + 0.35\,\exp\!\left(-\frac{(r-0.82)^2}{0.10^2}\right)\cos(5\theta-0.4).
\]
The initial vorticity field is constructed as a superposition of four Gaussian vortices,
\[
\omega_0(r,\theta) := \sum_{k=1}^4 A_k \exp\!\left(-\frac{(r-r_k)^2 + r_k^2\, d_S(\theta,\theta_k)^2}{2\sigma^2}\right),
\]
where
\[
d_S(\theta,\theta_k) := \operatorname{arctan}\bigl(\sin(\theta-\theta_k), \cos(\theta-\theta_k)\bigr)
\]
denotes the periodic angular distance on the unit circle, and \(\sigma = 0.085\).
The parameters \((r_k,\theta_k,A_k)\) are listed in Table~\ref{tab:annular_ic_params}.

\begin{table}[htbp]
\centering
\small
\begin{tabular}{c|cccc}
\toprule
$k$        & 1    & 2    & 3    & 4    \\
\midrule
$r_k$      & 0.55 & 0.70 & 0.58 & 0.78 \\
$\theta_k$ & 0.20 & 1.70 & 3.35 & 5.00 \\
$A_k$      & 2.6  & -2.2 & 2.4  & -2.0 \\
\bottomrule
\end{tabular}
\caption{Parameters of the four Gaussian vortices composing the initial condition for the annular domain experiments.}
\label{tab:annular_ic_params}
\end{table}

The inverse problem consists in identifying $\gamma := (\nu,\rho) \in (0,\infty)^2$. To this end, a high-fidelity reference solution \(\omega^\dagger\) is generated with ground-truth parameters $\nu^\dagger := 10^{-3}$ and $\rho^\dagger := 1$ using a second-order finite-difference scheme on a \(128\times256\) polar grid with time step \(dt_{\text{fine}}=3.5\times10^{-3}\). 

The observation set $\mathcal D := \{(r_i,\theta_i,t_i,\omega_i^\dagger)\}_{i=1}^{M}$, with $\omega_i^\dagger:=\omega^\dagger(r_i,\theta_i,t_i)$,
is split into four angular sectors. The measurements are split into two local datasets: $D_1$ containing observations from sectors $0$ and $2$ and $D_2$ with observations from sectors $1$ and $3$.

The physical model solves \eqref{eq:annular-ns} on a coarser \(64\times128\) grid with time step \(dt=5\times10^{-3}\) using the same second-order scheme; its prediction is denoted \(\omega_{\text{phy}}(\cdot,\cdot,\cdot;\gamma)\). The synthetic model is a fully connected neural network with Fourier feature embedding; its architecture and optimization details are given in Appendices \ref{supp:ns-hyco-implementation}-\ref{supp:ns-initialization}. The physical parameters are optimized with L-BFGS-B, while the synthetic network parameters are optimized with Adam. The losses \(\mathcal L_{\text{phy}}\), \(\mathcal L_{\text{syn}}\) and \(\mathcal L_{\text{int}}\) are defined as in the general framework, with weights specified in Appendices \ref{supp:ns-hyco-implementation}-\ref{supp:ns-initialization}. The coordinator aggregates the local parameter pairs by simple arithmetic averaging. 

\subsection{Sensitivity to observation noise}
We now examine the sensitivity of the algorithm to additive observation noise. The relative noise level
\[
\eta\in\{0.00,\,0.05,\,0.10,\,0.15,\,0.20\}
\]
is applied independently to the physical and synthetic observation sets. Table~\ref{tab:hyco-noise-param} summarizes the recovered parameters, relative errors, interaction-point consistency, and runtime.

\begin{table}[htbp]
\centering
\small
\begin{tabular}{@{}lccccc@{}}
        \toprule
        \multirow{2}{*}{Metric} & \multicolumn{5}{c}{\(\eta\)} \\
        \cmidrule(lr){2-6}
        & 0.00 & 0.05 & 0.10 & 0.15 & 0.20 \\
        \midrule
        $\hat\nu$ & 1.020e-03 & 1.191e-03 & 1.093e-03 & 1.065e-03 & 1.155e-03 \\
        $\hat\rho$ & 1.00031 & 1.00986 & 1.00463 & 0.99771 & 0.99864 \\
        $\delta\nu$ & 0.01977 & 0.19066 & 0.09254 & 0.06486 & 0.15532 \\
        $\delta\rho$ & 0.00031 & 0.00986 & 0.00463 & 0.00229 & 0.00136 \\
        $e_\gamma$ & 0.01978 & 0.19091 & 0.09265 & 0.06490 & 0.15532 \\
        $e_{\rm int}^{\rm train}$ & 1.024e-04 & 2.013e-04 & 3.709e-04 & 7.830e-04 & 1.337e-03 \\
        $e_{\rm int}^{\rm test}$ & 1.121e-04 & 1.982e-04 & 3.886e-04 & 7.816e-04 & 1.301e-03 \\
        Runtime (s) & 3330.19 & 3631.76 & 3797.15 & 3372.38 & 3885.40 \\
        \bottomrule
    \end{tabular}
    \caption{Parameter recovery and interaction-point agreement under increasing relative noise $\eta$.}
    \label{tab:hyco-noise-param}
\end{table}

All methods use the effective feasible initialization \((\nu_0,\rho_0)=(5\times10^{-3},0.5)\). The projection of the nominal unconstrained value is documented in Appendix \ref{supp:ns-initialization}. As in the elliptic experiment, the five noise levels are different conditions, not statistical replicates.

\medskip
\noindent\textbf{Vanishing-interaction ablation.}
We also repeated the annular Navier-Stokes experiment at \(\eta=0.20\) with
\(\lambda_{\rm int}=0\), keeping the fragmented observations, local optimization, and
coordinator aggregation unchanged. In this matched comparison, the joint parameter
error increases from \(e_\gamma=0.15532\) for full Bi-HYCO to \(e_\gamma=0.92131\) when the
explicit interaction term is removed. The full results are given in Appendix
\ref{supp:ns-vanishing-interaction}.
Because fragmentation, local optimization, and aggregation are unchanged, this matched run isolates the contribution of \(\mathcal L_{\rm int}\) relative to aggregation-only cooperation; it does not establish a general robustness or identifiability result.

\subsection{Comparison with Physics-Informed Neural Network variants}

The following comparison is carried out in the same spirit as in the elliptic example. The PINN and XPINN calculations are used as reference reconstructions under the same observation geometry and are not intended as a systematic comparison with classical PDE-constrained optimization. In particular, a classical centralized method supplied with the union of the fragmented datasets would solve a different information problem; see Section~\ref{sec:conclusion}.

Table~\ref{tab:annular-hyco-pinn-compare} summarizes the parameter and state reconstruction results. XPINN achieves the smallest joint parameter error in this realization, mainly because of its more accurate viscosity estimate, whereas Bi-HYCO provides the most accurate forcing-amplitude estimate and the smallest global state-reconstruction error.

\begin{table}[htbp]
\centering
\small
\setlength{\tabcolsep}{10pt}
\begin{tabular}{@{\hspace{2mm}} l ccc @{\hspace{2mm}}}
\toprule
\multirow{2}{*}{Metric} & \multicolumn{3}{c}{Method} \\
\cmidrule(lr){2-4}
                        & Bi-HYCO & PINN & XPINN \\
\midrule
$\hat\nu$ & 0.0010198 & 0.0009393 & 0.0009877 \\
$\hat\rho$ & 1.0003081 & 0.9894614 & 0.9922372 \\
$\delta\nu$ & 0.0197740 & 0.0606880 & 0.0122906 \\
$\delta\rho$ & 0.0003081 & 0.0105386 & 0.0077628 \\
$e_\gamma$ & 0.0197764 & 0.0615962 & 0.0145369 \\
$e_{\rm phy}(\Omega)$ & 0.0021712 & \multirow{2}{*}{0.0205557} & \multirow{2}{*}{0.0163554} \\
$e_{\rm syn}(\Omega)$ & 0.0221767 & \multicolumn{1}{c}{} & \multicolumn{1}{c}{} \\
$e_{\rm phy}(\Omega_1)$ & 0.0022837 & \multirow{2}{*}{0.0180420} & \multirow{2}{*}{0.0143055} \\
$e_{\rm syn}(\Omega_1)$ & 0.0256010 & \multicolumn{1}{c}{} & \multicolumn{1}{c}{} \\
$e_{\rm phy}(\Omega_2)$ & 0.0019986 & \multirow{2}{*}{0.0237020} & \multirow{2}{*}{0.0189131} \\
$e_{\rm syn}(\Omega_2)$ & 0.0160312 & \multicolumn{1}{c}{} & \multicolumn{1}{c}{} \\
$\mathrm{Gap}_{\rm phy}$ & -0.0002852 & \multirow{2}{*}{0.0056600} & \multirow{2}{*}{0.0046077} \\
$\mathrm{Gap}_{\rm syn}$ & 0.0095699 & \multicolumn{1}{c}{} & \multicolumn{1}{c}{} \\
Runtime (s) & 3330.1 & 1263.2& 5147.3 \\
\bottomrule
\end{tabular}
\caption{Comparison of Bi-HYCO, PINN, and XPINN on the Navier-Stokes inverse problem (noise level $\eta=0.00$).}
\label{tab:annular-hyco-pinn-compare}
\end{table}

For state reconstruction, the solver-based Bi-HYCO component is the most accurate of the considered approaches and exhibits comparable errors across the two observation sectors. The synthetic component also retains nontrivial accuracy in the sector without direct observations. This behavior is compatible with information exchange in the coupled procedure, although the comparison by itself does not separate the contribution of state interaction from that of parameter aggregation.

The measured runtime places Bi-HYCO between PINN and the four-network XPINN implementation. These timings are implementation-dependent and should be interpreted only within the computational setting described in Appendices \ref{supp:ns-hyco-implementation}-\ref{supp:ns-initialization}.

The solver-based Bi-HYCO errors are generally smaller than those of PINN and XPINN, consistently with the global errors reported in Table~\ref{tab:annular-hyco-pinn-compare}. The XPINN reconstruction also displays oscillatory patterns near some angular-sector interfaces.

Finally, Figure~\ref{fig:gradient} compares the gradient magnitude \(|\nabla\omega|\). In the reported realization, the principal gradient peaks of the solver-based Bi-HYCO reconstruction are closer in location and amplitude to those of the reference field. As in the elliptic example, this comparison is intended as a qualitative derivative diagnostic rather than as a separate consistency result.

\begin{figure}[t]
  \centering
  \includegraphics[width=0.90\textwidth]{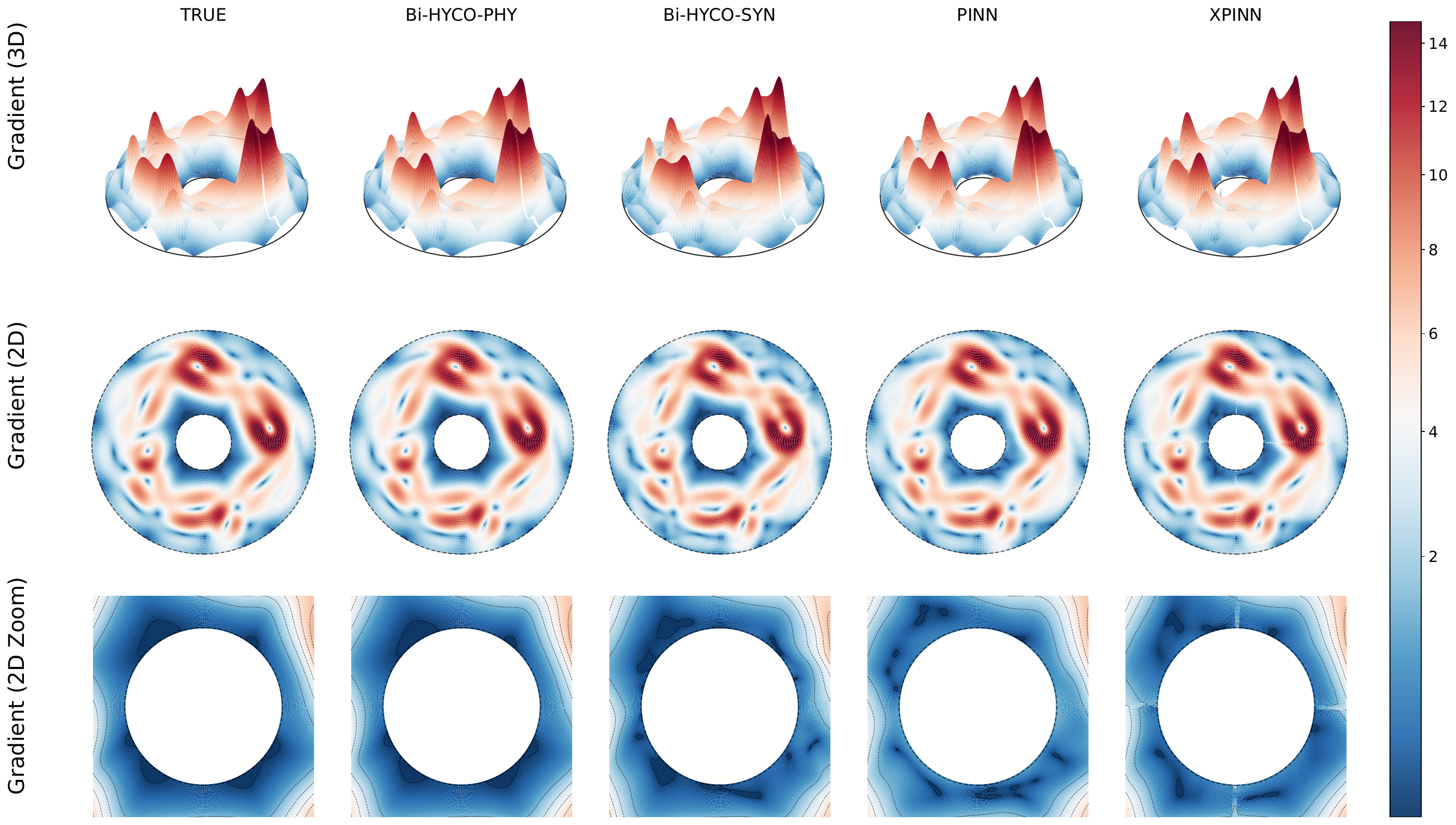}
  \caption{Gradient magnitude \(|\nabla \omega| = \sqrt{(\partial_r \omega)^2 + (r^{-1}\partial_\theta \omega)^2}\) at \(t=1.5\). The first row presents a 3D surface view, the second row a 2D top-down view, and the third row a zoomed 2D view of the same central region. }
  \label{fig:gradient}
\end{figure}

\section{Conclusions, discussion and perspectives}
\label{sec:conclusion}

Bi-HYCO provides a cooperative framework for combining physical and synthetic representations when the available observations are fragmented. The two models retain their own observation-based objectives and interact by comparing their predicted states at unlabeled points. This leads naturally to a bi-objective formulation, with weighted scalarization providing a practical way to balance the two components. For the alternating scheme with shared observations, we establish convergence of the whole sequence to a mixed critical point. In the fragmented-observation setting, the numerical experiments show that state interaction can improve parameter recovery beyond the effect of parameter aggregation alone.

The framework is intended for problems in which different observations are available to different models, for instance across spatial regions, time intervals, sensors, or computational units. In this setting, the models can exchange information through their predicted states without directly sharing their observation sets. This is particularly useful when the physical and synthetic representations provide different but complementary descriptions of the same system.

More generally, the results indicate that agreement between predicted states can provide a simple way for heterogeneous models to use information learned from different observations while preserving their distinct representations.

Several questions remain for future work. From the analytical viewpoint, it would be interesting to extend the convergence analysis to the decentralized algorithm, to investigate how state interaction affects parameter identifiability, and to obtain stability estimates involving \(\mathcal L_{\rm int}\). From the computational viewpoint, natural extensions include adaptive selection of the interaction points, higher-dimensional parameter spaces, and configurations involving several models or interfaces.

\section*{Acknowledgments}

The authors thank Lorenzo Liverani and Ziqian Li for fruitful discussions
and for valuable suggestions concerning the numerical implementation and
the visualization of the results.

\paragraph{Code and data availability}
The source code, experiment configurations, and data-generation procedures used for the reported computations are available in the accompanying repository \cite{Github}; the Supplementary Materials record the principal settings needed for reproduction.

{\appendix
	
\section{Experimental implementation details}\label{appendix}

This appendix provides implementation, reproducibility, ablation, and diagnostic material for the Bi-HYCO experiments in the main article. The complete proof of Theorem~\ref{thm:Alternating:Algorithm} is contained in the refereed main manuscript.

Section~\ref{supp:computing-environment} records the computing environment. Section~\ref{supp:elliptic-additional} contains the elliptic interaction ablation and complete Bi-HYCO/PINN/XPINN settings. Section~\ref{supp:ns-additional} gives the corresponding Navier--Stokes ablation, initialization, and implementation details. These items support reproduction and interpretation of the numerical results but are not required for the mathematical validity of the main convergence theorem.

\subsection{Computing environment and reproducibility}
\label{supp:computing-environment}

All simulations were conducted on a laptop running the 64-bit edition of Windows~11 (build 26100). The system was equipped with an Intel Core Ultra~5 225H processor (14 cores, 1.7~GHz base frequency, x86-64/AMD64 architecture) and 32~GB of RAM. The algorithms were implemented in Python~3.13.5 using the Anaconda distribution.

\subsection{Elliptic transmission experiment}
\label{supp:elliptic-additional}

\subsubsection{Vanishing-interaction experiment}
\label{supp:elliptic-vanishing-interaction}\label{subsec:elliptic-vanishing-interaction}

We examine parameter recovery in the vanishing-interaction regime
\(\lambda_{\rm int}=0\) at the highest noise level considered in the
experiment, namely \(\eta=0.20\). This case is used as an internal reference
for the role of the explicit interaction term. The data distribution, local
optimization procedure, and coordinator aggregation remain unchanged, while
the physical and synthetic states are no longer directly coupled through
\(\mathcal{L}_{\rm int}\). The resulting estimates are compared with the
Bi-HYCO values reported in Table~\ref{tab:elliptic-params} for the same
noise level.

\begin{table}[htbp]
	\centering
	\small
	\begin{tabular}{lcc}
		\toprule
		Metric
		& Full Bi-HYCO
		& Vanishing interaction, $\lambda_{\rm int}=0$
		\\
		\midrule
		$\widehat a_1$
		& $1.98788$
		& $1.98213$
		\\
		$\widehat a_2$
		& $0.99999$
		& $1.00066$
		\\
		$\delta_{a_1}$
		& $0.00606$
		& $0.00893$
		\\
		$\delta_{a_2}$
		& $4.00\times 10^{-6}$
		& $0.00066$
		\\
		$e_\gamma$
		& $0.00606$
		& $0.00896$
		\\
		\bottomrule
	\end{tabular}
	\caption{Parameter recovery for the elliptic transmission inverse problem
		at $\eta=0.20$. The Bi-HYCO values are those reported in
		Table~\ref{tab:elliptic-params}.}
	\label{tab:elliptic-vanishing-interaction}
\end{table}

Relative to the Bi-HYCO values reported in Table~\ref{tab:elliptic-params}, the vanishing-interaction configuration produces a larger joint parameter error. More precisely, \(e_\gamma\) increases from \(0.006060\) to \(0.00896\), corresponding to a factor of approximately \(1.48\). The relative error in \(a_1\) increases from \(0.006060\) to \(0.00893\), while the error in \(a_2\) increases from \(4\times10^{-6}\) to \(6.6\times10^{-4}\).

Thus, in this realization, removing the explicit interaction term deteriorates the recovery of both coefficients, although the effect is substantially more pronounced for \(a_1\) in absolute terms. According to the joint error \(e_\gamma\), the Bi-HYCO configuration therefore yields the more accurate parameter reconstruction at \(\eta=0.20\). This calculation suggests that the interaction term contributes favorably to parameter recovery in the prescribed noisy configuration, without implying a general robustness or identifiability result.

\subsubsection{Complete Bi-HYCO implementation details}\label{supp:elliptic-hyco-implementation}\label{app:hyco_details}

This subsection summarizes the numerical and algorithmic settings used in the experiments of Section~\ref{sec:elliptic}. All computations were performed with a fixed random seed (SEED=1240) to ensure reproducibility. Further implementation details are available in the accompanying source code.

Table~\ref{tab:hyco_params} lists the main hyperparameters of the Bi-HYCO framework. The neural network used by the synthetic component is a fully connected feedforward network with 4 hidden layers of width 96 and \(\tanh\) activation. The physical component uses an unstructured Delaunay mesh with the number of points given in the table. Both local units use the Adam optimizer \cite{kingma2015adam}. The learning rates for the coefficients and network parameters are decayed exponentially every 100 rounds by a factor of 0.9995. The coefficients are parametrized via a softplus transformation to enforce positivity. Interaction points are resampled every round with a mixture of 70\% near-interface and 30\% uniform samples.

\begin{table}[htbp]
	\centering
	\begin{tabular}{@{}l@{\quad}c@{}}
		\toprule
		Parameter & Value \\
		\midrule
		Communication rounds \(M\) & 20000 \\
		Local updates per round \(E_{\text{local}}\) & 2 \\
		Interaction points \(H\) & 1000 \\
		Physical observations & 800 \\
		Synthetic observations & 800 \\
		Mesh interior points (coarse) & 900 \\
		Mesh interior points (fine) & 1800 \\
		Mesh outer boundary points (coarse) & 180 \\
		Mesh outer boundary points (fine) & 360 \\
		Mesh interface points (coarse) & 160 \\
		Mesh interface points (fine) & 320 \\
		NN width / depth & 96 / 4 \\
		Initial LR for coefficients & \(5\times10^{-3}\) \\
		Initial LR for network & \(8\times10^{-4}\) \\
		Physical observation weight \(\alpha_{\text{phy}}\) & 0.3 \\
		Synthetic observation weight \(\alpha_{\text{syn}}\) & 0.2 \\
		Regularization \(\beta_{\text{phy}}\) & \(10^{-8}\) \\
		Weight decay \(\beta_{\text{syn}}\) & \(10^{-8}\) \\
		Scalarization weights (default) & $(\omega_{\rm phy},\omega_{\rm syn})=(0.5,0.5)$ \\
		Coordinator aggregation weights & $(w_1,w_2)=(0.5,0.5)$\\
		$\lambda_{\rm int}$ & $1.0$\\
		\bottomrule
	\end{tabular}
	\caption{Key hyperparameters for Decentralized Bi-HYCO.}
	\label{tab:hyco_params}
\end{table}

For the vanishing-interaction experiment in Section \ref{subsec:elliptic-vanishing-interaction}, all the settings reported in Table \ref{tab:hyco_params} are retained, except that $\lambda_{\rm int}$ is set from $1$ to $0$.

\subsubsection{PINN and XPINN implementation details}
\label{supp:elliptic-pinn-implementation}

For comparison, we optimized standard PINN and XPINN under identical data and evaluation conditions. The networks have the same architecture as the synthetic component (width 96, depth 4, \(\tanh\) activation), except XPINN uses two independent networks of width 80 for each subdomain. Both are optimized for 30000 epochs with Adam at a fixed learning rate of \(10^{-3}\). The loss functions include data fitting, PDE residual, boundary conditions, and interface continuity (solution and flux), with weights chosen to balance the terms. The coefficients are parametrized in the same softplus form as in Bi-HYCO.

Table~\ref{tab:pinn_params} summarizes the loss weights.

\begin{table}[htbp]
	\centering
	\begin{tabular}{@{}l@{\quad}c@{\quad}c@{}}
		\toprule
		Loss component & PINN & XPINN \\
		\midrule
		Data & 10.0 & 10.0 \\
		PDE residual & 1.0 & 1.0 \\
		Boundary (Dirichlet) & 10.0 & 10.0 \\
		Interface solution jump & 5.0 & 10.0 \\
		Interface flux jump & 1.0 & 2.0 \\
		\bottomrule
	\end{tabular}
	\caption{Loss weights for PINN and XPINN (both at \(\eta=0.00\)).}
	\label{tab:pinn_params}
\end{table}

All other settings (mesh, observations, noise generation, evaluation metrics) are identical to those used for Bi-HYCO. The optimization and evaluation meshes are the same as described in Section~\ref{sec:elliptic}.

\subsection{Navier-Stokes experiment}
\label{supp:ns-additional}

This section summarizes the numerical and algorithmic settings used in the experiments of Section~\ref{sec:annular-ns}. All computations were performed with a fixed random seed (SEED=7) to ensure reproducibility. Further implementation details are available in the accompanying source code.

\subsubsection{Vanishing-interaction experiment}
\label{supp:ns-vanishing-interaction}\label{subsec:ns-vanishing-interaction}

To isolate the effect of state-level interaction from that of coordinator aggregation, we repeat the experiment at the highest noise level, \(\eta=0.20\), after setting \(\lambda_{\rm int}=0\). All other elements of the fragmented-observation procedure, including the data distribution, local optimization, and coordinator aggregation, are left unchanged. We compare this aggregation-only configuration with the full Bi-HYCO reconstruction obtained with \(\lambda_{\rm int}>0\), together with the ground-truth parameter values.

Table \ref{tab:ns-interaction-ablation} shows a substantial deterioration in parameter recovery when the interaction term is suppressed. The effect is particularly pronounced for the viscosity, while the forcing-amplitude estimate also becomes less accurate. Overall, the joint parameter error increases markedly in the aggregation-only configuration.

\begin{table}[htbp]
	\centering
	\small
	\begin{tabular}{lcc}
		\toprule
		Metric
		& Full Bi-HYCO
		& Vanishing interaction, $\lambda_{\rm int}=0$
		\\
		\midrule
		$\widehat{\nu}$
		& $1.155\times10^{-3}$
		& $1.920\times10^{-3}$
		\\
		$\widehat{\rho}$
		& $0.99864$
		& $1.04749$
		\\
		$\delta_{\nu}$
		& $0.15532$
		& $0.92009$
		\\
		$\delta_{\rho}$
		& $0.00136$
		& $0.04749$
		\\
		$e_{\gamma}$
		& $0.15532$
		& $0.92131$
		\\
		\bottomrule
	\end{tabular}
	\caption{Parameter recovery for the annular Navier--Stokes inverse problem
		at $\eta=0.20$. The Bi-HYCO values are those reported in
		Table~\ref{tab:hyco-noise-param}.}
	\label{tab:ns-interaction-ablation}
\end{table}

This matched comparison indicates that, in this noisy realization, coordinator aggregation alone is not sufficient to obtain the accuracy achieved by the complete method. The explicit agreement between the physical and synthetic states therefore provides a significant contribution to parameter recovery. As this comparison concerns a single noise realization and parameter configuration, it should nevertheless be interpreted as numerical evidence rather than as a general robustness statement.

\subsubsection{Bi-HYCO implementation details}
\label{supp:ns-hyco-implementation}\label{app:annular-hyco}

Table~\ref{tab:annular_hyco_params} lists the main hyperparameters of the Bi-HYCO framework. The synthetic component uses a fully connected neural network with Fourier feature embedding, 4 hidden layers of width 80 and \(\tanh\) activation. The physical component solves the PDE on a polar grid with \(64\) radial and \(128\) angular cells, using the same second-order scheme as the reference solver but with a coarser resolution and larger time step. The physical parameters are optimized with L-BFGS-B (maximum 20 iterations per round), while the synthetic network parameters are optimized with Adam \cite{kingma2015adam} at a fixed learning rate of \(10^{-3}\). Interaction points are resampled uniformly at random in space and time every communication round.

\begin{table}[htbp]
	\centering
	\begin{tabular}{@{}l@{\quad}c@{}}
		\toprule
		Parameter & Value \\
		\midrule
		Communication rounds \(M\) & 150 \\
		Local updates per round \(E_{\text{local}}\) & 100 \\
		Interaction points \(H\) & 3000 \\
		Physical observations & 2500 \\
		Synthetic observations & 2500 \\
		Validation points & 3000 \\
		PDE solver grid \((n_r \times n_\theta)\) & \(64 \times 128\) \\
		Reference solver grid \((n_r \times n_\theta)\) & \(128 \times 256\) \\
		Time step (PDE solver) \(dt\) & \(5\times10^{-3}\) \\
		Time step (reference) \(dt_{\text{fine}}\) & \(3.5\times10^{-3}\) \\
		Number of stored snapshots & 13 \\
		NN width / depth & 80 / 4 \\
		%Fourier feature frequencies (spatial) & \(\{1,2,3,5\}\) \\
		%Fourier feature frequencies (temporal) & \(\{1,2,4\}\) \\
		Initial LR for network & \(10^{-3}\) \\
		L-BFGS-B max iterations & 20 \\
		Interaction weight \(\lambda_{\rm int}\) & 2.0 \\
		Effective Physical data weight \(a_{\rm phy}=w_{\text{data}}^{\text{phy}}\) & 1.0 \\
		Effective Synthetic data weight \(a_{\rm syn}=w_{\text{data}}^{\text{syn}}\) & 4.0 \\
		%IC weight (synthetic) \(w_{\text{ic}}^{\text{syn}}\) & 2.0 \\
		%BC weight (synthetic) \(w_{\text{bc}}^{\text{syn}}\) & 0.5 \\
		%Aggregation scheme & Arithmetic averaging (equal weights) \\
		\bottomrule
	\end{tabular}
	\caption{Key hyperparameters for the decentralized Bi-HYCO configuration in the annular experiment.}
	\label{tab:annular_hyco_params}
\end{table}

For the vanishing-interaction experiment in Section \ref{subsec:ns-vanishing-interaction}, all the settings reported in Table \ref{tab:annular_hyco_params} are retained, except that $\lambda_{\rm int}$ is set from $2$ to $0$.

\subsubsection{Initialization}
\label{supp:ns-initialization}

For the inversion of the physical parameters $(\nu,\rho)$, all three methods---Bi-HYCO, PINN, and XPINN---are subjected to the same feasible box constraints
\[
\nu \in [2\times 10^{-4},\, 5\times 10^{-3}], \qquad
\rho \in [0.25,\, 1.75].
\]
The nominal parameter initialization is \((\nu_0,\rho_0)=(0.5,0.5)\) for all methods. For Bi-HYCO, the bounded L-BFGS-B solver automatically projects this nominal value onto the feasible box, so the effective starting point is \((\nu_0,\rho_0)=(5\times 10^{-3},0.5)\). For PINN and XPINN, the feasible set is enforced by projecting the parameters onto the box after each gradient step, yielding the same effective initialization. This consistency across methods ensures a fair comparison of the subsequent optimization dynamics.

\subsubsection{PINN and XPINN implementation details}
\label{supp:ns-pinn-implementation}\label{app:annular-pinn}

For comparison, we optimized standard PINN and XPINN under identical data and evaluation conditions. The networks have the same architecture as the synthetic component (Fourier feature embedding, width 80, depth 4, \(\tanh\) activation), except XPINN uses four independent networks (one per angular sector) each with the same architecture. Both comparison methods are optimized for 10000 epochs with Adam \cite{kingma2015adam} at a learning rate \(8\times10^{-4}\) and a stepwise scheduler (decay by 0.75 every 250 epochs). The loss functions include data fitting, PDE residual, Poisson equation residual, initial condition, boundary condition (\(\psi=0\) and \(\omega=0\) on both walls, consistent with the homogeneous wall-vorticity condition used in the physical solver), and XPINN interface-continuity conditions for \(\psi\) and \(\omega\) across sector boundaries. Table~\ref{tab:annular_pinn_params} summarizes the loss weights.

\begin{table}[htbp]
	\centering
	\begin{tabular}{@{}l@{\quad}c@{\quad}c@{}}
		\toprule
		Loss component & PINN & XPINN \\
		\midrule
		Data fitting & 10.0 & 10.0 \\
		PDE residual & 1.0 & 1.0 \\
		Poisson residual & 1.0 & 1.0 \\
		Initial condition & 3.0 & 3.0 \\
		Boundary condition (walls) & 1.0 & 1.0 \\
		Interface continuity & --- & 2.0 \\
		\bottomrule
	\end{tabular}
	\caption{Loss weights for PINN and XPINN (annular experiment, \(\eta=0.00\)).}
	\label{tab:annular_pinn_params}
\end{table}

All other settings (observation points, validation set, noise generation, evaluation metrics, reference solution grid) are identical to those used for Bi-HYCO. The optimization and evaluation grids are the same as described in Section~\ref{sec:annular-ns}.

}
\bibliographystyle{acm}
\bibliography{References}

@book{alexanderian2026computational,
  title     = {Computational Inverse Problems Governed by {PDEs}},
  author    = {Alexanderian, Alen},
  series    = {Other Titles in Applied Mathematics},
  year      = {2026},
  publisher = {Society for Industrial and Applied Mathematics},
  address   = {Philadelphia, PA},
  pages     = {336},
  isbn      = {978-1-61197-881-0},
  doi       = {10.1137/1.9781611978827},
  note      = {Electronic ISBN: 978-1-61197-882-7; SIAM Book Code OT211}
}

@article{AttouchBolteSvaiter2013,
  author  = {Hedy Attouch and J{\'e}r{\^o}me Bolte and Benar Fux Svaiter},
  title   = {Convergence of Descent Methods for Semi-Algebraic and Tame Problems: Proximal Algorithms, Forward--Backward Splitting, and Regularized Gauss--Seidel Methods},
  journal = {Mathematical Programming},
  volume  = {137},
  number  = {1--2},
  pages   = {91--129},
  year    = {2013},
  doi     = {10.1007/s10107-011-0484-9}
}

@book{banks1989estimation,
  title     = {Estimation Techniques for Distributed Parameter Systems},
  author    = {Banks, H. T. and Kunisch, Karl},
  series    = {Systems \& Control: Foundations \& Applications},
  volume    = {1},
  year      = {1989},
  publisher = {Birkh{\"a}user},
  address   = {Boston, MA},
  isbn      = {978-0-8176-3433-9},
  doi       = {10.1007/978-1-4612-3700-6},
}

@article{BolteDaniilidisLewis2007,
  author  = {J{\'e}r{\^o}me Bolte and Aris Daniilidis and Adrian Lewis},
  title   = {The {\L}ojasiewicz Inequality for Nonsmooth Subanalytic Functions with Applications to Subgradient Dynamical Systems},
  journal = {SIAM Journal on Optimization},
  volume  = {17},
  number  = {4},
  pages   = {1205--1223},
  year    = {2007},
  doi     = {10.1137/050644641}
}

@article{BolteSabachTeboulle2014,
  author  = {J{\'e}r{\^o}me Bolte and Shoham Sabach and Marc Teboulle},
  title   = {Proximal Alternating Linearized Minimization for Nonconvex and Nonsmooth Problems},
  journal = {Mathematical Programming},
  volume  = {146},
  number  = {1--2},
  pages   = {459--494},
  year    = {2014},
  doi     = {10.1007/s10107-013-0701-9}
}

@book{chavent2010nonlinear,
  title     = {Nonlinear Least Squares for Inverse Problems:
               Theoretical Foundations and Step-by-Step Guide for Applications},
  author    = {Chavent, Guy},
  series    = {Scientific Computation},
  edition   = {1},
  year      = {2010},
  publisher = {Springer},
  address   = {Dordrecht},
  pages     = {xiv+360},
  isbn      = {978-90-481-2785-6},
  doi       = {10.1007/978-90-481-2785-6},
}

@book{Clarke1990,
  author    = {Frank H. Clarke},
  title     = {Optimization and Nonsmooth Analysis},
  publisher = {Society for Industrial and Applied Mathematics},
  address   = {Philadelphia},
  year      = {1990},
  series    = {Classics in Applied Mathematics},
  volume    = {5},
  isbn      = {9780898712568}
}

@book{dautray1988mathematical,
  author    = {Dautray, Robert and Lions, Jacques-Louis},
  title     = {Mathematical Analysis and Numerical Methods for Science and Technology},
  subtitle  = {Volume 2: Functional and Variational Methods},
  volume    = {2},
  publisher = {Springer-Verlag},
  address   = {Berlin},
  year      = {1988},
  isbn      = {978-3-540-19045-5},
  note      = {With the collaboration of Michel Artola; translated from the French by Ian N. Sneddon}
}

@book{Jahn2004,
  author    = {Jahn, Johannes},
  title     = {Vector Optimization: Theory, Applications, and Extensions},
  publisher = {Springer},
  address   = {Berlin},
  year      = {2004}
}

@article{giles2000introduction,
  title     = {An Introduction to the Adjoint Approach to Design},
  author    = {Giles, Michael B. and Pierce, Niles A.},
  journal   = {Flow, Turbulence and Combustion},
  volume    = {65},
  number    = {3--4},
  pages     = {393--415},
  year      = {2000},
  month     = dec,
  publisher = {Kluwer Academic Publishers},
  doi       = {10.1023/A:1011430410075},
}

@misc{Github,
  author = {Biccari, Umberto and Chen, Jun and Morales, Roberto and Zuazua, Enrique},
  title = {CoDeFeL-HYCO-Bi-objective-perspective},
  year = {2026},
  howpublished = {\url{https://github.com/DeustoTech/CoDeFeL-HYCO-Bi-objective-perspective}},
}

@book{gunzburger2003perspectives,
  title     = {Perspectives in Flow Control and Optimization},
  author    = {Gunzburger, Max D.},
  series    = {Advances in Design and Control},
  volume    = {5},
  year      = {2003},
  publisher = {Society for Industrial and Applied Mathematics},
  address   = {Philadelphia, PA},
  pages     = {xiv+261},
  isbn      = {978-0-89871-527-9},
  doi       = {10.1137/1.9780898718720},
  note      = {Electronic ISBN: 978-0-89871-872-0; SIAM Book Code DC05}
}

@book{hinze2009optimization,
  title     = {Optimization with {PDE} Constraints},
  author    = {Hinze, Michael and Pinnau, Ren{\'e} and
               Ulbrich, Michael and Ulbrich, Stefan},
  series    = {Mathematical Modelling: Theory and Applications},
  volume    = {23},
  edition   = {1},
  year      = {2009},
  publisher = {Springer},
  address   = {Dordrecht},
  pages     = {xii+270},
  isbn      = {978-1-4020-8838-4},
  doi       = {10.1007/978-1-4020-8839-1},
}

@inproceedings{kingma2015adam,
  title         = {Adam: A Method for Stochastic Optimization},
  author        = {Kingma, Diederik P. and Ba, Jimmy},
  booktitle     = {Proceedings of the 3rd International Conference on
                   Learning Representations ({ICLR})},
  year          = {2015},
  address       = {San Diego, CA, USA},
  eprint        = {1412.6980},
  archiveprefix = {arXiv},
  primaryclass  = {cs.LG}
}

@book{kirsch2021introduction,
  title     = {An Introduction to the Mathematical Theory of Inverse Problems},
  author    = {Kirsch, Andreas},
  series    = {Applied Mathematical Sciences},
  volume    = {120},
  edition   = {3},
  year      = {2021},
  publisher = {Springer},
  address   = {Cham},
  isbn      = {978-3-030-63342-4},
  doi       = {10.1007/978-3-030-63343-1},
  note      = {Electronic ISBN: 978-3-030-63343-1}
}

@article{liverani2025hyco,
  title={{HYCO}: Hybrid-cooperative learning for data-driven {PDE} modeling},
  author={Liverani, Lorenzo and Steynberg, Matthys and Zuazua, Enrique},
  journal={arXiv preprint arXiv:2509.14123},
  year={2025}
}

@article{liverani2026hyco,
  title={{HYCO}: A Formalism for Hybrid-Cooperative {PDE} Modelling},
  author={Liverani, Lorenzo and Zuazua, Enrique},
  journal={arXiv preprint arXiv:2602.23859},
  year={2026}
}

@inproceedings{li2019convergence,
  title     = {On the Convergence of {FedAvg} on Non-{IID} Data},
  author    = {Li, Xiang and Huang, Kaixuan and Yang, Wenhao and Wang, Shusen and Zhang, Zhihua},
  booktitle = {International Conference on Learning Representations},
  year      = {2020}
}

@inproceedings{mcmahan2017communication,
  title     = {Communication-Efficient Learning of Deep Networks from Decentralized Data},
  author    = {McMahan, Brendan and Moore, Eider and Ramage, Daniel and
               Hampson, Seth and {Ag{\"u}era y Arcas}, Blaise},
  editor    = {Singh, Aarti and Zhu, Jerry},
  booktitle = {Proceedings of the 20th International Conference on
               Artificial Intelligence and Statistics},
  series    = {Proceedings of Machine Learning Research},
  volume    = {54},
  pages     = {1273--1282},
  year      = {2017},
  month     = apr,
  publisher = {PMLR},
  address   = {Fort Lauderdale, Florida, USA},
  url       = {https://proceedings.mlr.press/v54/mcmahan17a.html}
}

@book{miettinen1999nonlinear,
  author    = {Kaisa Miettinen},
  title     = {Nonlinear Multiobjective Optimization},
  series    = {International Series in Operations Research \& Management Science},
  volume    = {12},
  year      = {1999},
  publisher = {Springer},
  address   = {New York, NY},
  isbn      = {978-0-7923-8278-2},
  doi       = {10.1007/978-1-4615-5563-6}
}

@article{plessix2006review,
  title     = {A Review of the Adjoint-State Method for Computing the
               Gradient of a Functional with Geophysical Applications},
  author    = {Plessix, R.-E.},
  journal   = {Geophysical Journal International},
  volume    = {167},
  number    = {2},
  pages     = {495--503},
  year      = {2006},
  month     = nov,
  doi       = {10.1111/j.1365-246X.2006.02978.x},
}

@article{quarteroni2025combining,
  title        = {Combining physics-based and data-driven models: advancing the frontiers of research with Scientific Machine Learning},
  author       = {Quarteroni, Alfio and Gervasio, Paola and Regazzoni, Francesco},
  journal      = {Mathematical Models and Methods in Applied Sciences},
  volume       = {35},
  number       = {4},
  pages        = {905--1071},
  year         = {2025},
  doi          = {10.1142/S0218202525500125}
}

@book{tarantola2005inverse,
  title     = {Inverse Problem Theory and Methods for Model Parameter Estimation},
  author    = {Tarantola, Albert},
  series    = {Other Titles in Applied Mathematics},
  year      = {2005},
  publisher = {Society for Industrial and Applied Mathematics},
  address   = {Philadelphia, PA},
  pages     = {ix+339},
  isbn      = {978-0-89871-572-9},
  doi       = {10.1137/1.9780898717921},
  note      = {Electronic ISBN: 978-0-89871-792-1}
}

@article{xu2022physics,
  title   = {Physics Constrained Learning for Data-Driven Inverse
             Modeling from Sparse Observations},
  author  = {Xu, Kailai and Darve, Eric},
  journal = {Journal of Computational Physics},
  volume  = {453},
  pages   = {110938},
  year    = {2022},
  doi     = {10.1016/j.jcp.2021.110938},
}

@article{yang2021bpinns,
  title   = {{B-PINNs}: Bayesian Physics-Informed Neural Networks for
             Forward and Inverse {PDE} Problems with Noisy Data},
  author  = {Yang, Liu and Meng, Xuhui and Karniadakis, George Em},
  journal = {Journal of Computational Physics},
  volume  = {425},
  pages   = {109913},
  year    = {2021},
  doi     = {10.1016/j.jcp.2020.109913},
}

@article{moseley2023fbpinns,
  title        = {Finite Basis Physics-Informed Neural Networks ({FBPINN}s): A Scalable Domain Decomposition Approach for Solving Differential Equations},
  author       = {Moseley, Ben and Markham, Andrew and Nissen-Meyer, Tarje},
  journal      = {Advances in Computational Mathematics},
  volume       = {49},
  number       = {4},
  articleno    = {62},
  year         = {2023},
  month        = jul,
  publisher    = {Springer},
  doi          = {10.1007/s10444-023-10065-9},
}

@article{montans2019data,
  title        = {Data-driven modeling and learning in science and engineering},
  author       = {Mont{\'a}ns, Francisco J. and Chinesta, Francisco and G{\'o}mez-Bombarelli, Rafael and Kutz, J. Nathan},
  journal      = {Comptes Rendus M{\'e}canique},
  volume       = {347},
  number       = {11},
  pages        = {845--855},
  year         = {2019},
  doi          = {10.1016/j.crme.2019.11.009}
}

@article{raissi2019physics,
  title        = {Physics-informed neural networks: A deep learning framework for solving forward and inverse problems involving nonlinear partial differential equations},
  author       = {Raissi, Maziar and Perdikaris, Paris and Karniadakis, George Em},
  journal      = {Journal of Computational Physics},
  volume       = {378},
  pages        = {686--707},
  year         = {2019},
  doi          = {10.1016/j.jcp.2018.10.045}
}

@article{yang2019federated,
  title={Federated machine learning: Concept and applications},
  author={Yang, Qiang and Liu, Yang and Chen, Tianjian and Tong, Yongxin},
  journal={ACM Transactions on Intelligent Systems and Technology},
  volume={10},
  number={2},
  pages={1--19},
  year={2019},
  publisher={ACM New York, NY, USA}
}

@misc{zuazua2026coercivitygap,
  title        = {The Coercivity Gap in Neural {PDE} Solvers: Parameter Escape and Functional Convergence},
  author       = {Zuazua, Enrique},
  howpublished = {To appear in \emph{Mathematical Models and Methods in Applied Sciences}},
  eprint       = {2606.04018},
  archivePrefix = {arXiv},
  primaryClass = {math.NA}
}

@article{jagtap2020extended,
  title={Extended physics-informed neural networks ({XPINNs}): A generalized space-time domain decomposition based deep learning framework for nonlinear partial differential equations},
  author={Jagtap, Ameya D. and Karniadakis, George Em},
  journal={Communications in Computational Physics},
  volume={28},
  number={5},
  year={2020},
  publisher={Brown Univ., Providence, RI (United States)}
}

\end{document}